\newtheorem{theorem}{Theorem}[section]
\newtheorem{lemma}[theorem]{Lemma}
\newtheorem{corollary}[theorem]{Corollary}
\newtheorem{example}[theorem]{Example}
\newtheorem{proposition}[theorem]{Proposition}
\newtheorem{definition}[theorem]{Definition}
\def\norm#1{\|#1\|}
\def\R{{\mathbb R}}
\def\be{\begin{eqnarray*}}
\def\ee{\end{eqnarray*}}
\def\beq{\begin{equation}}
\def\eeq{\end{equation}}
\def\2q{\quad\quad}
\def\R{{\mathbb R}}
\def\:{{\,:\,}}
\def\norm#1{{\left\|\,#1\,\right\|}}
\def\abs#1{{\left|\,#1\,\right|}}
\def\norm #1{\|#1\|}
\def\abs #1{|#1|}
\def\argmin{\mathop{\rm arg\,min}}
\def\inprod#1#2{\langle #1,\,#2\rangle}
\def\itb{\begin{itemize}}
\def\ite{\end{itemize}}
\def\bit{\begin{itemize}}
\def\eit{\end{itemize}}
\def\dom{\hbox{dom}}
\begin{document}

\title{Logitron: Perceptron-augmented classification model based on an extended logistic loss function}

\author{Hyenkyun Woo
\thanks{School of Liberal Arts, Korea University of Technology and Education, \; hyenkyun@koreatech.ac.kr, hyenkyun@gmail.com}}

\markboth{...,~Vol.~x, No.~x,\;\; \today}{}

\maketitle

\begin{abstract}
Classification is the most important process in data analysis. However, due to the inherent non-convex and non-smooth structure of the zero-one loss function of the classification model, various convex surrogate loss functions such as hinge loss, squared hinge loss, logistic loss, and exponential loss are introduced. These loss functions have been used for decades in diverse classification models, such as SVM (support vector machine) with hinge loss, logistic regression with logistic loss, and Adaboost with exponential loss and so on. In this work, we present a Perceptron-augmented convex classification framework, {\it Logitron}. The loss function of it is a smoothly stitched function of the extended logistic loss with the famous Perceptron loss function. The extended logistic loss function is a parameterized function established based on the extended logarithmic function and the extended exponential function. The main advantage of the proposed Logitron classification model is that it shows the connection between SVM and logistic regression via polynomial parameterization of the loss function. In more details, depending on the choice of parameters, we have the Hinge-Logitron which has the generalized $k$-th order hinge-loss with an additional $k$-th root stabilization function and the Logistic-Logitron which has a logistic-like loss function with relatively large $\abs{k}$. Interestingly, even $k=-1$, Hinge-Logitron satisfies the classification-calibration condition and shows reasonable classification performance with low computational cost. The numerical experiment in the linear classifier framework demonstrates that Hinge-Logitron with $k=4$ (the fourth-order SVM with the fourth root stabilization function) outperforms logistic regression, SVM, and other Logitron models in terms of classification accuracy.
\end{abstract}

\begin{IEEEkeywords}
Extended exponential function, extended logarithmic function, logistic regression, extended logistic regression, sigmoid, extended sigmoid function, hinge loss, higher-order hinge loss, support vector machine, Perceptron
\end{IEEEkeywords}


\section{Introduction}
Learning a decision boundary for the classification of data observed in a real world is a fundamental and important process in machine learning~\cite{murphy12,shalev14} and thus various classification models are introduced during the last several decades; for instance, logistic regression~\cite{cox58}, SVM (support vector machine)~\cite{vapnik00}, decision trees~\cite{breiman84}, random forests~\cite{breiman01}, neural networks~\cite{rosenblatt57,bengio15}, and boosting~\cite{freund97,friedman00,chen16} have been developed. Among these diverse classification models, logistic regression is a probability-based popular model~\cite{thomas17}. In this work, we are mainly interested in a convex classification model {\it Logitron} built up with the classic Perceptron loss function and the extended logistic loss function, which is not a specific loss function but a polynomial parameterized loss function based on the extended logarithmic function~\cite{woo17} and the extended exponential function~\cite{woo18}. Note that the extended logistic loss function includes a lot of surrogate loss functions appearing in various margin-based classification models. For instance, unhinge loss~\cite{rooyen15}, exponential loss~\cite{freund97}, logistic loss~\cite{cox58,friedman00}, sigmoid function~\cite{mason00} and its variant Savage loss~\cite{masnadi08}, and so on. Among them, the non-convex loss functions or unbounded convex loss function, e.g., sigmoid, Savage loss, and unhinge loss, are mainly used for robust boosting classification model. Last but not least, \cite{ding10} has introduced $t$-logistic regression based on the $t$-exponential family for robustness of the classification model. 

Let us start with the standard binary classification model~\cite{lecun06,shalev14,vapnik00}.  A formal binary classifier $g_f(x)$ is simply defined as $g_f(x) = sign(f(x))$ where $sign(f(x))=+1$ if $f(x)>0$ and $-1$ otherwise. Here $f(x) : {\cal X} \rightarrow \R$ is a predictor (or score function) and ${\cal X} = \{ x \in \R^n \;|\; \norm{x}_{\infty} \le R_{\cal X} \}$ is a feature space and $R_{\cal X}$ is a constant. Note that $f \in {\cal F}$ where ${\cal F}$ is a function space defined based on a category of classification models. For instance, when we learn a hyper-plane of the feature space, we set ${\cal F} = \{ \inprod{w}{x} + b \;|\; w \in {\cal W}(n), b \in \R , x \in {\cal X} \}$ with ${\cal W}(n) = \{ w \in \R^n \;|\; \norm{w}_{\infty} < R_{\cal W} \}$ and $R_{\cal W}$ is a constant. For more advanced classification models such as ensemble learning models and (deep) neural networks, a sophisticated function space is required. For ensemble learning models~\cite{shalev14}, i.e., boosting and bagging, ${\cal F} = \{  \inprod{w}{g(x)} \;|\; w\in {\cal W}(N), g = (g_1,...,g_N) \in {\cal B}^N  \}$ where ${\cal B}$ is a function space of so-called base (or weak) classifiers. For (deep) neural network~\cite{zhang15,glorot11,elsayed18,collobert04,bengio15}, ${\cal F} = {\cal N}_r$ with ${\cal N}_r = \{ \inprod{w}{\sigma(f(x))} \;|\; f \in {\cal N}^N_{r-1}, w\in {\cal W}(N) \}$ and ${\cal N}_1 = \{\inprod{w}{x} \;|\; w \in {\cal W}(N) \}$. Here $\sigma(x)$, which is known as an activation function, is the only nonlinear function in neural network. A typical example is the sigmoid function~\cite{collobert04}. Recently, $\max$ function-based rectified linear unit (i.e., ReLU) is used as an activation function for deep neural network~\cite{glorot11}. For kernel-based learning model, which is a straightforward extension of the linear classifier, we can set ${\cal F} = \{ \sum_{i=1}^N w_ik(x_i,x) \;|\; \inprod{w}{Kw} \in [0,R^2_{\cal W}] , K = [k(x_i,x_j)] \in \R^{N \times N},\; x_i \hbox{ is an observed data}  \}$. For more details on various classification model and the corresponding function space, see \cite{murphy12,lecun06,zhang15,boucheron05} and references therein. Unless otherwise stated, in this work, we assume that ${\cal F}$ is a linear function space
\begin{equation}\label{funcSpace}
{\cal F} = \{ \inprod{w}{x} + b \;|\; w \in {\cal W}(n), b \in \R, \hbox{ and } x \in {\cal X}  \}
\end{equation}

Now, the question is that, from the collected training data $(x_1,y_1),..., (x_N,y_N) \in {\cal X} \times {\cal Y}$ with ${\cal Y} = \{-1,+1\}$, how can we find the right prediction function $f$ minimizing $\hbox{Prob}(g_f(x) \not= y)$?  A simple approach is to directly minimize the misclassification error (i.e., the zero-one loss function~\cite{nguyen13}), $\hbox{Prob}(g_f(x) \not= y) = \frac{1}{N}\sum_{i=1}^N \ell_{0/1}(y_if(x_i))$ where $\ell_{0/1}(z) = {\bf 1}(-z)$ and ${\bf 1}(\cdot)$ is an indicator function, i.e., ${\bf 1}(a) = 1$ if $a > 0$ and $0$ otherwise. Although the zero-one loss function $\ell_{0/1}$ is simple and easy to understand, it is non-differentiable and non-convex. Finding global optimums of it is a typical NP-hard problems~\cite{nguyen13}. Instead of using bilevel zero-one loss function, we can consider convex relaxations of that. For instance, we have the classic Perceptron loss function $\ell_P(z) = \max(0,-z)$ and the corresponding minimization problem (i.e. Perceptron~\cite{rosenblatt57}):
\begin{equation}\label{maxloss}
\min_{f \in {\cal F}}\; \sum_{i=1}^N \ell_P(y_if(x_i))
\end{equation}
where $\ell_P(yf(x))=\abs{f(x)}$ is linearly penalized with respect to $f(x)$ only if $g_f(x) \not=y$. Actually, it is easy to find a solution of the Perceptron model~\eqref{maxloss} with the subgradient-based method, known as the Perceptron algorithm. The main concern of \eqref{maxloss} is that it is sensitive to the noise (or data) near the decision boundary, i.e., $(x,y) \in D(\varepsilon) = \{ (x,y) \in {\cal X} \times {\cal Y} \;|\;  \abs{yf(x)} \le \varepsilon \}.$ In fact, \eqref{maxloss} does not have sufficient margin. As a solution of the insufficiency of margin, we can consider higher-order SVM~\cite{vapnik00,bartlett06,janocha17}:
\begin{equation}\label{hingeloss}
\min_{f \in {\cal F}}\; \sum_{i=1}^N \ell_{H,k}(y_if(x_i))
\end{equation}
where $k \in {\mathbb N}$ and $\ell_{H,k}(y_if(x_i)) = (\max(0, 1- y_if(x_i)))^{k}$ is the higher-order hinge-loss function. Especially, when $k=1$, \eqref{hingeloss} is the classic SVM, known as the max-margin classifier, with the first-order hinge loss function~\cite{vapnik00} and when $k=2$, it is known as L2SVM (or squared SVM)~\cite{fan08}. Recently, the third order hinge loss function $\ell_{H,3}(z)$ is introduced as an activation function for the deep neural network~\cite{janocha17}. To the best of author's knowledge, $k$-th hinge loss function $\ell_{H,k}(z)$ with $k \ge 4$ is not introduced in literatures. In this work, we study stabilized $k$-th order SVM which has arbitrary $k \in \mathbb{N}$ within the proposed Logitron framework.


As observed in \cite{murphy12}, the misclassification error $\hbox{Prob}(g_f(x) \not= y)$ can also be formulated with the sigmoid probability function $p_f(x) = \frac{1}{1 + \exp(-f(x))}$ and the corresponding classifier $g_f(x) = sign(p_f(x)-0.5)$. In fact, by using the negative log-likelihood of the Bernoulli distribution which has the sigmoid function $p_f(x)$ as the probability density function, we get the famous logistic loss function $\ell_L(yf(x)) = \log(1 + \exp(-yf(x)))$ and the corresponding logistic regression formulation:
\begin{equation}\label{clogistic}
\min_{f \in {\cal F}}\; \sum_{i=1}^N\; \ell_L(y_if(x_i))
\end{equation}
where $(x_i,y_i) \in {\cal X} \times \{-1,+1\}$. The main advantage of this model is that the logistic loss function $\ell_L$ is sufficiently smooth and the gradient of it is the sigmoid probability function. That is, let $y=-1$ then we have
$\frac{d\ell_L(-f)}{df} = \frac{1}{1 + \exp(-f(x))} = p_{f}(x)$. Though, the logistic regression is a typical example of the margin-based classification model, since $\ell_L(z) > 0$ for all $z \in \R$, it is unclear how to connect this model to the SVM, the max-margin classifier. 

The proposed Logitron, having the Perceptron-augmented extended convex logistic loss function, is inherently similar to the logistic regression with an additional margin control parameter. Roughly, we can say that the Logitron is the generalized $q$-th order SVM with an additional stabilization $q$-th root function ($q \in \R \setminus [0,1)$). Depending on the choice of parameters, we have the Hinge-Logitron with hinge-like loss function with relatively small value of $\abs{q}$ and the Logistic-Logitron with logistic-like loss function with relatively larger value of $\abs{q}$. In terms of logistic regression framework, when $\abs{q}$ is relatively large, the generalized $q$-th order SVM corresponds to the exponential function and the stabilization $q$-th root function corresponds to the logarithmic function. Interestingly, even $q<0$, we have classification model which satisfying the classification-calibrated condition~\cite{bartlett06}. In fact, when $q=-1$, the Hinge-Logitron is implementable with simple elementary mathematical operations such as division and show reasonable classification performance. Note that the margin of the Logitron loss function is defined as the intersection point of the closure of the domain of the extended exponential function and the Perceptron loss function. When the intersection point is located on the positive real line ($q>0$), it corresponds to the classic margin. Interestingly, the Logitron loss function is sufficiently smooth on its entire domain $\R$ under the mild restriction of the parameter and therefore, we can easily use the conventional gradient-based optimization model to find a solution of the Logitron model. 

As regards the numerical experiments, for multi-class classification problem, we have used OVA (one-vs-all) framework. The Hinge-Logitron {\bf H-4} (i.e., the fourth-order SVM with the fourth-root stabilization function) shows the best performance in learning hyperplanes~\eqref{funcSpace}. Compared to the conventional second-order SVM, known as L2SVM~\cite{fan08}, the proposed Hinge-Logitron {\bf H-2} (i.e., second order SVM with root stabilizer function) shows better performance in terms of classification accuracy. The Logistic-Logitron {\bf L-} (i.e., a group of the Logitron model with $q=5,6,8,12$) shows the best performance with respect to the Friedman ranking~\cite{delgado14}. As a by-product of the generalization to the negative region of $q$, we obtain classification-calibrated new classification model. This new classification model also shows better performance than the conventional logistic regression and SVM in terms of the classification accuracy.
\subsection{Notation\label{sec1-1}}
We briefly review a convex function and related useful notations such as extended-valued function. See \cite{roc70,hir96,bauschke11} for more details.

Let $h : \hbox{dom}(h) \rightarrow \R$ be a convex, lower semicontinuous, and proper function on its convex domain 
\begin{equation}\label{dom}
\hbox{dom} (h) = \{ z \in \R \;|\; h(z) \not= \emptyset \}.
\end{equation}
As observed in \cite{hir96}, the convexity of $h$ can be extended to the whole real line $\R$ by using the extended-valued function $h^e  : \R \rightarrow \R_{\infty}$:
\begin{equation}\label{extendedfunc}
h^e(z) = \left\{\begin{array}{l} h(z) \quad z \in \Omega \\ +\infty \quad z \not\in \Omega  \end{array}\right.
\end{equation}
where $\R_{\infty} = \R \cup \{ +\infty \}$ and $\Omega = \hbox{dom}(h)$. Depending on applications~\cite{woo17}, $\Omega$ can be any convex set in $\R$. Unless otherwise stated, as suggested in \cite{hir96}, a convex function in this work is an extended-valued convex function~\eqref{extendedfunc} and, for simplicity, we will drop the superscript 'e' in the extended-valued function $h^e$. In $\R_{\infty}$ (an extended-valued real number system), we introduce several arithmetic operations with $+\infty$ which are useful later. That is, $a+\infty = \infty$ for all $a \in \R$, $1/\infty = 0$ (it means $\lim_{n \rightarrow +\infty}1/n = 0$), $1/0 = +\infty$  (it means $\lim_{\epsilon \rightarrow 0_+} 1/\epsilon = +\infty$), and $1=\frac{\infty + a}{\infty +b}$ (it means $\lim_{z \rightarrow c_{\alpha}} \frac{h(z)+a}{h(z)+b}=1$ with $h(c_{\alpha})=+\infty$ and $a,b \in \R$). 

Let $\Omega$ be any convex set in $\R$. Then $int(\Omega)$ is the interior of $\Omega$ and $bd(\Omega) = cl(\Omega) \setminus int(\Omega)$ is the boundary of $\Omega$. Here $cl(\Omega)$ is the closure of $\Omega$. We also set $\R_{++} = \{ z \in \R \;|\; z >0 \}$, $\R_{+} = \{ z \in \R \;|\; z \ge 0 \}$, $\R_{\ge c_{\alpha}} = \{ z \in \R \;|\; z \ge c_{\alpha} \}$, and $\R_{> c_{\alpha}} = \{ z \in \R \;|\; z > c_{\alpha} \}$. The corresponding negative intervals are also defined in the same way. Note that $\mathbb{Q}$ is a set of rational number, $\mathbb{Z}$ is a set of integer, and $\mathbb{N}$ is a set of natural number. Additionally, $\hbox{dom}(h)$ is always assumed to be a convex set, irrespective of convexity of $h$. 
 
\subsection{Overview}
The paper is organized as follows. In Section \ref{sec2}, we review extended exponential and logarithmic functions which are studied in~\cite{woo17,woo18}. In Section \ref{sec3}, we introduce the extended logistic loss function defined with the extended exp and log function and the corresponding general classification framework, {\it Logitron}. The loss function of it is a smoothed stitching of the Perceptron loss and the restricted version of the extended logistic loss. In Section \ref{sec4}, we reinterpret the Logitron by the generalized $q$-th order SVM with the $q$-th root stabilization function. Here $q \in \mathbb{Q} \cap \R \setminus [0,1)$. Actually, L2SVM, known as the SVM with squared hinge loss, can be reformulated into the Hinge-Logitron {\bf H-2} with an additional root stabilization function.  In Section \ref{sec5}, we evaluate the performance of the proposed Logitron with more than one hundred datasets~\cite{delgado14}. The conclusions are given in Section \ref{sec6}.

\section{Extended exponential function and extended logarithmic function  \label{sec2}}
In this Section, we review the extended exponential function~\cite{woo18} and its inverse function, the extended logarithmic function~\cite{woo17}. These extended elementary functions are fundamental ingredients of the extended logistic loss function and the Logitron classification model. 

Firstly, let us start with the definition of an extended logarithmic function~\cite{woo17}. It is a generalized logarithmic function~\cite{amari16,amari00,tsallis09} with an additional scaling parameter. Later, we will explain the role of an additional parameter in details in terms of the margin of the Logitron classification model.  
\begin{definition}\label{elogf}
Let $\alpha \in \R_{+},$ $u \in \hbox{dom} (\ln_{\alpha,c}) \subseteq \R.$ Then the extended logarithmic function is defined as
\begin{equation}\label{genlog}
\ln_{\alpha,c}(u) = \int_{c}^{u} x^{-\alpha} dx
\end{equation}
where $c \in \R_{c,u} = \{c \in \R \setminus \{ 0 \} \;|\; \ln_{\alpha, c}(u) \in \R \;\hbox{ and }\; \hbox{sign}(u) = \hbox{sign}(c) \}$. 
After integration, we have a simplified version of it by
\begin{equation}\label{exlog}
\ln_{\alpha,c}(u) = \left\{\begin{array}{l} \ln\left(\frac{u}{c}\right),\hskip 2.3cm \hbox{ if } \alpha=1 \\ \frac{1}{1-\alpha}(u^{1-\alpha} - c^{1-\alpha}), \quad \hbox{ otherwise } \end{array}\right.
\end{equation}
\end{definition}
The convexity of $\ln_{\alpha,c}$ depends on parameters $\alpha$ and $c$. See \cite{woo17}, for more detail characterization of the domain of $\ln_{\alpha,c}$. In fact, $\hbox{dom}(\ln_{\alpha,c})$ is rather complicated. As observed in \cite{woo17}, the domain of $\ln_{\alpha,c}$ should be determined to meet the requirement of applications, such as $\beta$-divergence~\cite{woo17} and statistical Tweedie distribution~\cite{woo18}. If we set $c=1$, the extended log function $\ln_{\alpha,1}(u) = \int_1^u x^{-\alpha}dx$ becomes the generalized log function~\cite{amari16,tsallis09}. 

Secondly, we introduce an extended exponential function~\cite{woo18}, the scaled version of  the generalized exponential function~\cite{amari16,tsallis09,ding10}.  Note that the scaling parameter $c$ of the extended exponential function is very important in the Logitron loss function, since it controls the margin of the classification model unlike the generalized exp function.  

\begin{definition}\label{eexpf}
Let $\alpha \in \R_+,$ $v \in \hbox{dom} (\exp_{\alpha,c}) \subseteq \R,$ and
\begin{equation}\label{genexp}
\exp_{\alpha,c}(v) = y
\end{equation}
where $\exp_{\alpha,c}(v)$ is defined to satisfy the following relation:
\begin{equation}\label{genexp2}
v = \int_{c}^{y} x^{-\alpha} dx
\end{equation}
where $c \in \R_{c,y} = \{c \in \R \setminus \{ 0 \} \;|\; sign(y) = sign(c) \}.$ After integration, we get a simplified version of it by
\begin{equation}\label{exexp}
\exp_{\alpha,c}(v) =
\left\{\begin{array}{l} 
c\exp(v), \hskip +2.9cm \hbox{ if } \alpha=1\\
(c^{1-\alpha} + (1-\alpha)v)^{1/(1-\alpha)}, \quad \hbox{ otherwise }
\end{array}\right.
\end{equation}
\end{definition}
 If we set $c=1$, the extended exponential function, $\exp_{\alpha,1}(v)$ becomes generalized exponential function in \cite{amari16,ding10}. The convexity of the extended exp function $\exp_{\alpha,c}$ depends on parameters $\alpha, c$ and thus  the structure of $\hbox{dom}(\exp_{\alpha,c})$ is complicated~\cite{woo18}. What is even worse, the extended exponential function defined in Definition \ref{eexpf} does not have inverse relation with the extended log function defined in Definition \ref{elogf}. 
 Additionally, as observed in \cite{woo17,woo18}, the domains of them should be carefully selected to meet various conditions related to the high level structures. A typical example is a condition of convex function of Legendre type~\cite{roc70}. With the restricted domains satisfying the condition of the convex function of Legendre type, it is possible to obtain rather complicated dual relation between $\beta$-divergence and the Tweedie distribution~\cite{woo18,woo17, jorgensen97, banerjee05}. 

In this work, we are going to use extended exp and log functions for classification purpose only.  Hence, we significantly reduce domains of them. See Table \ref{table7} for more details.
\begin{table}[h]
\centerline{\begin{tabular}{c|cclc|ccc}
\hline\hline
   & $\alpha = 1$  & $0 \le \alpha < 1$  & $\alpha>1$  \\ \hline 
 $\dom (\ln_{\alpha,c})$ & $\R_{++}$  & $\R_+$  & $\R_{++}$ \\ \hline
$\dom (\exp_{\alpha,c})$   & $\R$  & $\R_{\ge c_{\alpha}}$ & $\R_{< c_{\alpha}}$ \\ \hline\hline
\end{tabular}}
\caption{The restricted convex domains of the extended exponential function $\exp_{\alpha,c}(x)$ and the extended logarithmic function $\ln_{\alpha,c}(x)$ for classification purpose. Note that the domain of $\exp_{\alpha,c}$ (i.e., the range of $\ln_{\alpha,c}$) can be adjusted by controlling the auxiliary parameter $c$. Here $c >0$ and $c_{\alpha} = \frac{1}{\alpha-1} c^{1-\alpha}$.}\label{table7}
\end{table}
Now, $\exp_{\alpha,c}$ and $-\ln_{\alpha,c}$ with domains in Table \ref{table7} are convex and extended-valued functions. We summarize various properties of them below.
\begin{proposition}\label{propexp}
Let $(\alpha,c) \in (\R_+ \setminus\{ 1 \}) \times \R_{++}$. Then the extended exp function $\exp_{\alpha,c} : \R \rightarrow \R_{\infty}$ and the extended log function $-\ln_{\alpha,c} : \R \rightarrow \R_{\infty}$ have the following properties with the domains in Table~\ref{table7}. Here $x_{\alpha} = \frac{x^{1-\alpha}}{\alpha-1}$ and $0_+ = \lim_{\varepsilon \rightarrow 0^+} 0 + \varepsilon$.
\begin{enumerate}
\item $\forall \alpha \in (0,1)$, $\exp_{\alpha,c}$ is strictly increasing and, $\forall \alpha \in \R_+ \setminus \{ 1 \}$, $\ln_{\alpha,c}$ is strictly increasing.
\item $\exp_{\alpha,c}$ and $-\ln_{\alpha,c}$ are convex functions on their domains.
\end{enumerate}
\end{proposition}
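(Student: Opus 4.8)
The plan is to verify the two claimed properties directly from the explicit formulas \eqref{exlog} and \eqref{exexp}, treating the cases $0<\alpha<1$ and $\alpha>1$ separately because the domains in Table~\ref{table7} differ. Throughout I would restrict attention to $\alpha \in \R_+ \setminus \{1\}$ (the case $\alpha=1$ is excluded by hypothesis), and I would keep $c>0$ fixed as stipulated.

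For the monotonicity statement in item~(1), the cleanest route is differentiation. For $\ln_{\alpha,c}$, the very definition \eqref{genlog} gives $\frac{d}{du}\ln_{\alpha,c}(u) = u^{-\alpha}$ by the fundamental theorem of calculus, and since every point of $\dom(\ln_{\alpha,c})$ is positive (it is $\R_+$ or $\R_{++}$ per Table~\ref{table7}), we have $u^{-\alpha} > 0$, so $\ln_{\alpha,c}$ is strictly increasing for all admissible $\alpha$; one must only note at the endpoint $u=0$ (present when $0\le\alpha<1$) that the value is still well-defined and the function continues to increase into $\R_{++}$. For $\exp_{\alpha,c}$ with $0<\alpha<1$, I would differentiate the closed form $(c^{1-\alpha}+(1-\alpha)v)^{1/(1-\alpha)}$ via the chain rule, obtaining $(c^{1-\alpha}+(1-\alpha)v)^{\alpha/(1-\alpha)}$; since $0<\alpha<1$ forces $\dom(\exp_{\alpha,c}) = \R_{\ge c_\alpha}$ to be exactly the set where the base $c^{1-\alpha}+(1-\alpha)v$ is nonnegative, the derivative is positive on the interior, giving strict monotonic increase. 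The reason the statement only claims increase of $\exp_{\alpha,c}$ for $\alpha\in(0,1)$ and not for $\alpha>1$ is precisely that when $\alpha>1$ the exponent $1/(1-\alpha)$ is negative and the function decreases; I would flag this asymmetry rather than attempt to prove a false uniform claim.

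For convexity in item~(2), I would compute second derivatives from the same closed forms. For $-\ln_{\alpha,c}$, differentiating $u^{-\alpha}$ again and negating gives second derivative $\alpha\, u^{-\alpha-1} > 0$ on the (positive) domain, so $-\ln_{\alpha,c}$ is convex. For $\exp_{\alpha,c}$, a second application of the chain rule to the closed form yields a second derivative proportional to $\alpha\,(c^{1-\alpha}+(1-\alpha)v)^{(2\alpha-1)/(1-\alpha)}$; the key point is that the constant prefactor and the base are arranged so this is nonnegative on the stated domain in \emph{both} regimes $0<\alpha<1$ and $\alpha>1$, which is exactly why the domains in Table~\ref{table7} were chosen as they were. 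I would then invoke that these are extended-valued functions (equal to $+\infty$ off the domain, by \eqref{extendedfunc}), so convexity on the convex domain extends to convexity on all of $\R$ in the extended sense.

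The main obstacle I anticipate is not the interior computation but the boundary behavior. When $\alpha>1$ the closed form involves a negative power, so as $v \to c_\alpha^-$ the argument $c^{1-\alpha}+(1-\alpha)v \to 0_+$ and $\exp_{\alpha,c}(v) \to +\infty$; I would need to check that this blow-up occurs at the correct endpoint consistent with $\dom(\exp_{\alpha,c}) = \R_{<c_\alpha}$ and that the one-sided limits and the convention $1/0 = +\infty$ from the Notation subsection make the extended-valued function lower semicontinuous there. Similarly, at $u=0$ in the case $0\le\alpha<1$ for $-\ln_{\alpha,c}$, I would confirm the endpoint value and that the second-derivative sign argument, which holds on the open interior, is compatible with convexity up to the boundary. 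These endpoint checks, together with careful tracking of the sign of $1-\alpha$ in each power, are where the real care is required; the rest is routine differentiation.
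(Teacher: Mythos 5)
Your treatment of the two claims the proposition actually makes is correct, and it follows essentially the same route as the paper's proof: differentiate the closed forms \eqref{exlog} and \eqref{exexp}, check signs on the domains of Table~\ref{table7}, and extend to the boundary. (The paper phrases this via the identities $\ln_{\alpha,c}(x)=c_{\alpha}-x_{\alpha}$ and $\exp_{\alpha,c}(x)=c\left(1-x/c_{\alpha}\right)^{1/(1-\alpha)}$, but the underlying computations are the ones you describe, and your FTC argument for $\ln_{\alpha,c}$ and second-derivative computations are sound.)

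However, the side remark you would ``flag'' is mathematically false, and you should not include it. For $\alpha>1$ the function $\exp_{\alpha,c}$ does \emph{not} decrease: on $\dom(\exp_{\alpha,c})=\R_{<c_{\alpha}}$ the base $g(v)=c^{1-\alpha}+(1-\alpha)v$ is positive and decreasing, and a positive decreasing quantity raised to the negative power $1/(1-\alpha)$ is \emph{increasing}. Indeed, the same chain-rule computation you carried out for $\alpha\in(0,1)$ gives
\[
\frac{d}{dv}\exp_{\alpha,c}(v)=\left(c^{1-\alpha}+(1-\alpha)v\right)^{\alpha/(1-\alpha)}>0
\]
for every $\alpha\in\R_+\setminus\{1\}$, because the two factors of $(1-\alpha)$ cancel regardless of their sign. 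Concretely, $\exp_{2,1}(v)=1/(1-v)$ increases from $0$ to $+\infty$ on $(-\infty,1)$. This is also forced by Lemma~\ref{lemmaG}: $\exp_{\alpha,c}$ is the inverse of the strictly increasing $\ln_{\alpha,c}$, so it must itself be strictly increasing. Hence the restriction to $\alpha\in(0,1)$ in item (1) is not because the claim fails for $\alpha>1$; the proposition simply does not assert the (true) stronger statement. One further cosmetic caveat: since $\R_+\setminus\{1\}$ includes $\alpha=0$, your assertion that the second derivative of $-\ln_{\alpha,c}$ is $\alpha u^{-\alpha-1}>0$ should read $\ge 0$; at $\alpha=0$ the function is affine, which is still convex.
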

\begin{proof}
\begin{itemize}
\item[1)] Under domains in Table \ref{table7}, it is easy to see $\ln_{\alpha,c}(x) = c_{\alpha}-x_{\alpha}$ is strictly increasing. In case of $\exp_{\alpha,c}(x)$, since $1- \frac{x}{c_{\alpha}}>0$ for all $x \in \dom(\exp_{\alpha,c})$, $\exp_{\alpha,c}(x) = c (1-\frac{x}{c_{\alpha}})^{\frac{1}{(1-\alpha)}}$ is strictly increasing when $\alpha \in (0,1)$.
\item[2)] Since $x_{\alpha}$ is convex for all $x \in \dom(x_{\alpha}) \cap \R_+$, $-\ln_{\alpha,c}(x) = x_{\alpha} - c_{\alpha}$ is a convex function on its domain in Table \ref{table7}. For all $x \in int(\dom(\exp_{\alpha,c}))$, we have $\exp''_{\alpha,c}(x) > 0$ and convexity can be easily extended to the boundary of the domain in Table \ref{table7}.
\end{itemize}
\end{proof}
Now, we will show that the extended exponential function~\eqref{exexp} and the extended logarithmic function~\eqref{exlog} are well-defined (i.e., $\exp_{\alpha,c} = \ln_{\alpha,c}^{-1}$ and $\ln_{\alpha,c} = \exp_{\alpha,c}^{-1}$). Actually, we show the isomorphic inverse relation between \eqref{exlog} and \eqref{exexp} below under the restricted domains in Table \ref{table7}. 
\begin{lemma}\label{lemmaG}.
Let $\alpha \in \R_+$ and $c \in \R_{++}$. Then we have the bijective mapping between the the extended log~\eqref{exlog} and the extended exp~\eqref{exexp} functions with the restricted domains in Table \ref{table7}: 
$$
\ln_{\alpha,c} : \dom (\ln_{\alpha,c}) \rightarrow \dom (\exp_{\alpha,c})
$$
with the corresponding inverse map $\ln_{\alpha,c}^{-1} = \exp_{\alpha,c}$
$$
\exp_{\alpha,c}: \dom (\exp_{\alpha,c}) \rightarrow \dom (\ln_{\alpha,c})
$$
\end{lemma}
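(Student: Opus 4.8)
The plan is to package the statement as ``continuous, strictly increasing, and onto the prescribed codomain,'' which yields a bijection, and then read off the inverse by solving the defining equation algebraically. The case $\alpha=1$ is immediate and I would dispose of it first: $\ln_{1,c}$ carries $\R_{++}$ onto $\R$, and $\exp_{1,c}(\ln_{1,c}(u)) = c\exp(\ln(u/c)) = u$ while $\ln_{1,c}(\exp_{1,c}(v)) = \ln(ce^{v}/c) = v$, so the two maps are mutually inverse. The remaining work is for $\alpha \in \R_+ \setminus \{1\}$.

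First I would invoke Proposition \ref{propexp}(1), which asserts that $\ln_{\alpha,c}$ is strictly increasing on $\dom(\ln_{\alpha,c})$ for every $\alpha \in \R_+ \setminus \{1\}$; strict monotonicity gives injectivity for free. Since $\ln_{\alpha,c}$ is also continuous on the interval $\dom(\ln_{\alpha,c})$, its image is the interval bounded by the one-sided limits at the endpoints, so surjectivity onto $\dom(\exp_{\alpha,c})$ reduces to a single range computation. Using $\ln_{\alpha,c}(u) = c_\alpha - u_\alpha$ with $u_\alpha = u^{1-\alpha}/(\alpha-1)$ (from the proof of Proposition \ref{propexp}), I would evaluate the endpoints: for $0 \le \alpha < 1$ the domain is $\R_+$, with $\ln_{\alpha,c}(0) = c_\alpha$ attained and $\ln_{\alpha,c}(u) \to +\infty$ as $u \to \infty$, giving image exactly $\R_{\ge c_\alpha}$; for $\alpha > 1$ the domain is $\R_{++}$, with $\ln_{\alpha,c}(u) \to -\infty$ as $u \to 0^{+}$ and $\ln_{\alpha,c}(u) \to c_\alpha$ as $u \to \infty$ without attaining it, giving image exactly $\R_{< c_\alpha}$. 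In both cases the image coincides with $\dom(\exp_{\alpha,c})$ of Table \ref{table7}.

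Finally, the explicit inverse drops out by solving $v = \frac{1}{1-\alpha}(u^{1-\alpha} - c^{1-\alpha})$ for $u$: rearranging gives $u^{1-\alpha} = c^{1-\alpha} + (1-\alpha)v$, hence $u = (c^{1-\alpha} + (1-\alpha)v)^{1/(1-\alpha)} = \exp_{\alpha,c}(v)$, which is precisely formula \eqref{exexp}. Substituting in either order then confirms $\exp_{\alpha,c} \circ \ln_{\alpha,c} = \mathrm{id}$ on $\dom(\ln_{\alpha,c})$ and $\ln_{\alpha,c} \circ \exp_{\alpha,c} = \mathrm{id}$ on $\dom(\exp_{\alpha,c})$; since $\ln_{\alpha,c}$ is already a bijection onto $\dom(\exp_{\alpha,c})$, the map $\exp_{\alpha,c}$ is automatically its two-sided inverse, and no separate monotonicity argument for $\exp_{\alpha,c}$ is needed.

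The main obstacle I expect is the boundary bookkeeping rather than any hard estimate: one must verify that the base $c^{1-\alpha} + (1-\alpha)v$ lands in the correct half-line so that the fractional power $1/(1-\alpha)$ is well-defined on exactly $\dom(\exp_{\alpha,c})$, matching the open/closed character of the endpoint $c_\alpha$ on the two sides. Concretely, using $(1-\alpha)c_\alpha = -c^{1-\alpha}$, one checks $c^{1-\alpha} + (1-\alpha)v \ge 0$ for $v \ge c_\alpha$ when $\alpha < 1$ (with equality exactly at $v = c_\alpha$, matching the closed endpoint of $\R_{\ge c_\alpha}$) and $c^{1-\alpha} + (1-\alpha)v > 0$ for $v < c_\alpha$ when $\alpha > 1$ (strict, matching the open endpoint of $\R_{< c_\alpha}$, where the sign of $1-\alpha$ reverses the inequality). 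Everything else is routine substitution.
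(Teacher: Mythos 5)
Your proof is correct. The paper in fact gives no written-out proof of Lemma \ref{lemmaG} --- it only remarks that the result ``can be easily derived from Table \ref{table7} and the definition of the extended exp~\eqref{exexp} and extended log~\eqref{exlog}'' --- and your argument (strict monotonicity from Proposition \ref{propexp}, continuity plus the endpoint/limit computation showing the range is exactly $\dom(\exp_{\alpha,c})$ in each of the cases $0\le\alpha<1$, $\alpha=1$, $\alpha>1$, and then algebraic inversion with the sign check on $c^{1-\alpha}+(1-\alpha)v$ at the endpoint $c_{\alpha}$) is precisely the fleshed-out version of that remark, not a different route.
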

Note that the proof of Lemma \ref{lemmaG} can be easily derived from Table \ref{table7} and the definition of the extended exp~\eqref{exexp} and extended log~\eqref{exlog}. The following Lemma is useful while we define the loss function for the classification model. In fact, the range of the extended exponential function always equals to the domain of the extended logarithmic function, irrespective of choice of parameters $\alpha$ and $c$.
\begin{lemma}\label{lemmax}
For any $\alpha, \beta \in \R_+$, $c_1,c_2 \in \R_{++}$, and domains in Table \ref{table7}, we have
\begin{equation}\label{eqx}
\hbox{ran}(\exp_{\beta,c_{2}}) = \dom(\ln_{\alpha,c_{1}})
\end{equation}
\end{lemma}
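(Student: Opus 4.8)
The plan is to reduce the claim to a statement about the domain of the extended logarithm alone and then read the answer off Table~\ref{table7}. First I would invoke Lemma~\ref{lemmaG}, which already gives that $\exp_{\beta,c_2}$ is a bijection from $\dom(\exp_{\beta,c_2})$ onto $\dom(\ln_{\beta,c_2})$, so that $\hbox{ran}(\exp_{\beta,c_2}) = \dom(\ln_{\beta,c_2})$. This replaces the (harder) task of computing the range of the extended exponential by the (easier) task of comparing two domains of the extended logarithm. The key structural observation is that $\dom(\ln_{\alpha,c})$ is independent of the scaling parameter $c$: every entry in the $\dom(\ln_{\alpha,c})$ row of Table~\ref{table7} is either $\R_{++}$ (when $\alpha=1$ or $\alpha>1$) or $\R_+$ (when $0\le\alpha<1$), and none involves $c$. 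Hence $\dom(\ln_{\beta,c_2}) = \dom(\ln_{\beta,c_1})$ unconditionally, and this already settles the lemma whenever $\alpha$ and $\beta$ lie in the same regime.

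To keep the argument self-contained I would also verify the range of $\exp_{\beta,c_2}$ directly from the closed form~\eqref{exexp}, using the strict monotonicity in Proposition~\ref{propexp} so that the range is the full interval spanned by the two endpoints of $\dom(\exp_{\beta,c_2})$. For $\beta=1$, $c_2\exp(v)$ sweeps $\R_{++}$ as $v$ ranges over $\R$. For $0\le\beta<1$, the value $(c_2^{1-\beta}+(1-\beta)v)^{1/(1-\beta)}$ equals $0$ at the left endpoint $v=c_\beta$ and tends to $+\infty$, giving $\R_+$. For $\beta>1$, the same expression tends to $0^+$ as $v\to-\infty$ and to $+\infty$ as $v\to c_\beta^-$ (the negative exponent $1/(1-\beta)$ reversing the monotonicity of the base), giving $\R_{++}$. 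In each case the scaling $c_2$ is immaterial, and the outcome matches the corresponding column of Table~\ref{table7}.

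The one point requiring care — and the place I expect the argument to be delicate — is the boundary value $0$: the range is the closed ray $\R_+$ precisely when $0\le\beta<1$, but the open ray $\R_{++}$ when $\beta\ge1$, and the same dichotomy governs $\dom(\ln_{\alpha,c_1})$. To read the equality as holding for arbitrary $\alpha,\beta$, as stated, I would appeal to the extended-valued conventions of Section~\ref{sec1-1} and treat the endpoint $0$ through the limiting behavior of $\ln_{\alpha,c}$ there, which is finite when $0\le\alpha<1$ but equals $-\infty$ (so that $0$ falls outside the effective domain) when $\alpha\ge1$. This is exactly what permits the two positive rays to be identified and makes the composition $\ln_{\alpha,c_1}\circ\exp_{\beta,c_2}$ well defined, which is the only role the lemma plays in building the extended logistic loss in the sequel.
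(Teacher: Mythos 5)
Your proposal follows the same route as the paper's own proof: invoke Lemma~\ref{lemmaG} to convert $\hbox{ran}(\exp_{\beta,c_2})$ into $\dom(\ln_{\beta,c_2})$, then read off Table~\ref{table7} that this domain is insensitive to the parameters, so it coincides with $\dom(\ln_{\alpha,c_1})$. If anything, you are more careful than the paper on the one delicate point: the paper flatly asserts that $\dom(\ln_{\alpha,c})$ ``does not depend on the choice of $\alpha$ and $c$,'' whereas that row of Table~\ref{table7} actually switches between $\R_+$ ($0\le\alpha<1$) and $\R_{++}$ ($\alpha\ge 1$), a discrepancy at the single point $0$ which your appeal to the extended-valued conventions (and, in the intended application, the shift by $c>0$ inside the loss $\ln_{\alpha,c_1}(c+\exp_{\beta,c_2}(-x))$) is exactly what is needed to dispose of.
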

\begin{proof}
Due to the isomorphic mapping in Lemma \ref{lemmaG} (i.e., $\ln^{-1}_{\alpha,c} = \exp_{\alpha,c}$ and $\ln_{\alpha,c} = \exp^{-1}_{\alpha,c}$) on domains defined in Table \ref{table7}, we have
$$
\hbox{ran}(\exp_{\alpha,c}) = \dom(\ln_{\alpha,c})
$$  
As observed in Table \ref{table7}, the domain of $\ln_{\alpha,c}$ does not depend on the choice of $\alpha$ and $c$. Hence, we have
$$
\hbox{ran}(\exp_{\beta,c_{2}}) = \dom(\ln_{\alpha,c_{1}})
$$
for any choice of $\alpha, \beta \in \R$ and $c_1,c_2 \in \R_{++}$.
\end{proof}
The independency of the parameter $\alpha$ and $\beta$ introduced in Lemma \ref{lemmax} is very useful while we characterize the structure of the extended logistic loss function in the coming Section. 

\begin{figure*}[t]
\centering
\includegraphics[width=5.0in]{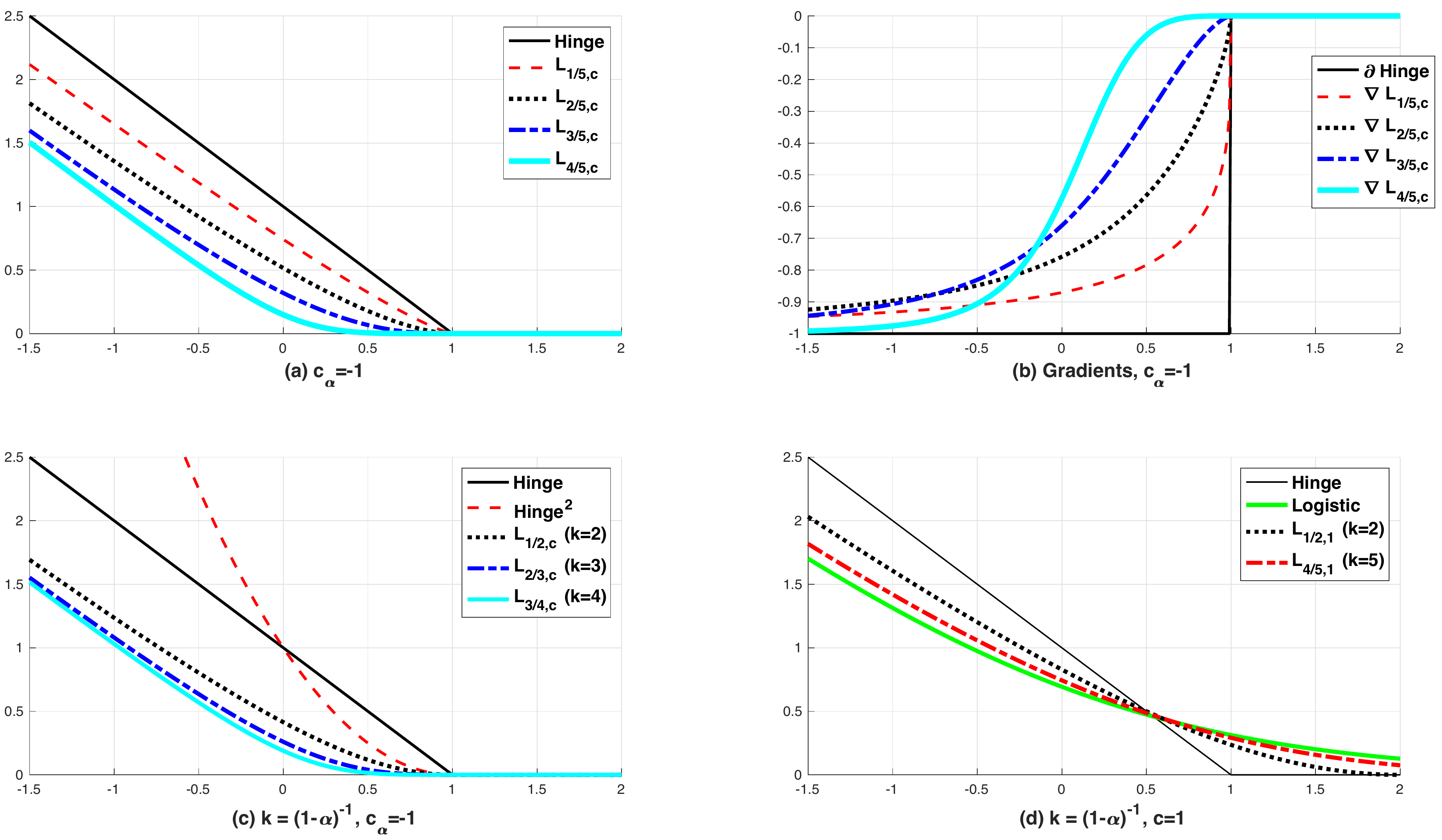}
\caption{Comparison of the Logitron loss with $\alpha<1$ and the well-known higher-order hinge loss in \eqref{hingeloss} and logistic loss in \eqref{clogistic}. (a) shows the relation between the Hinge-Logitron with $\alpha = i/5, \; i=1,2,3,4$ and $c_{\alpha}=-1$ and the hinge loss in \eqref{hingeloss}. (b) shows the gradient (or subgradient) of the loss functions in (a). (c) shows the relation between the Hinge-Logitron with $c_{\alpha}=-1$ and $\alpha=1/2,2/3,3/4$ (i.e., $k$-th order hinge loss with $k$-th root stabilizing function. Here $k=2,3,4$) and the first-order hinge loss and the second-order hinge loss. (d) shows the relation between Logistic-Logitron with $c=1$ and $\alpha=1/2, 4/5$ and the logistic loss and the hinge loss function. It is quite easy to identify the role of the margin parameter $c$ (or $c_{\alpha}$) and the model parameter $\alpha$ in the Logitron loss function.}
\label{fig:img3}
\end{figure*}

\begin{figure*}[t]
\centering
\includegraphics[width=5.0in]{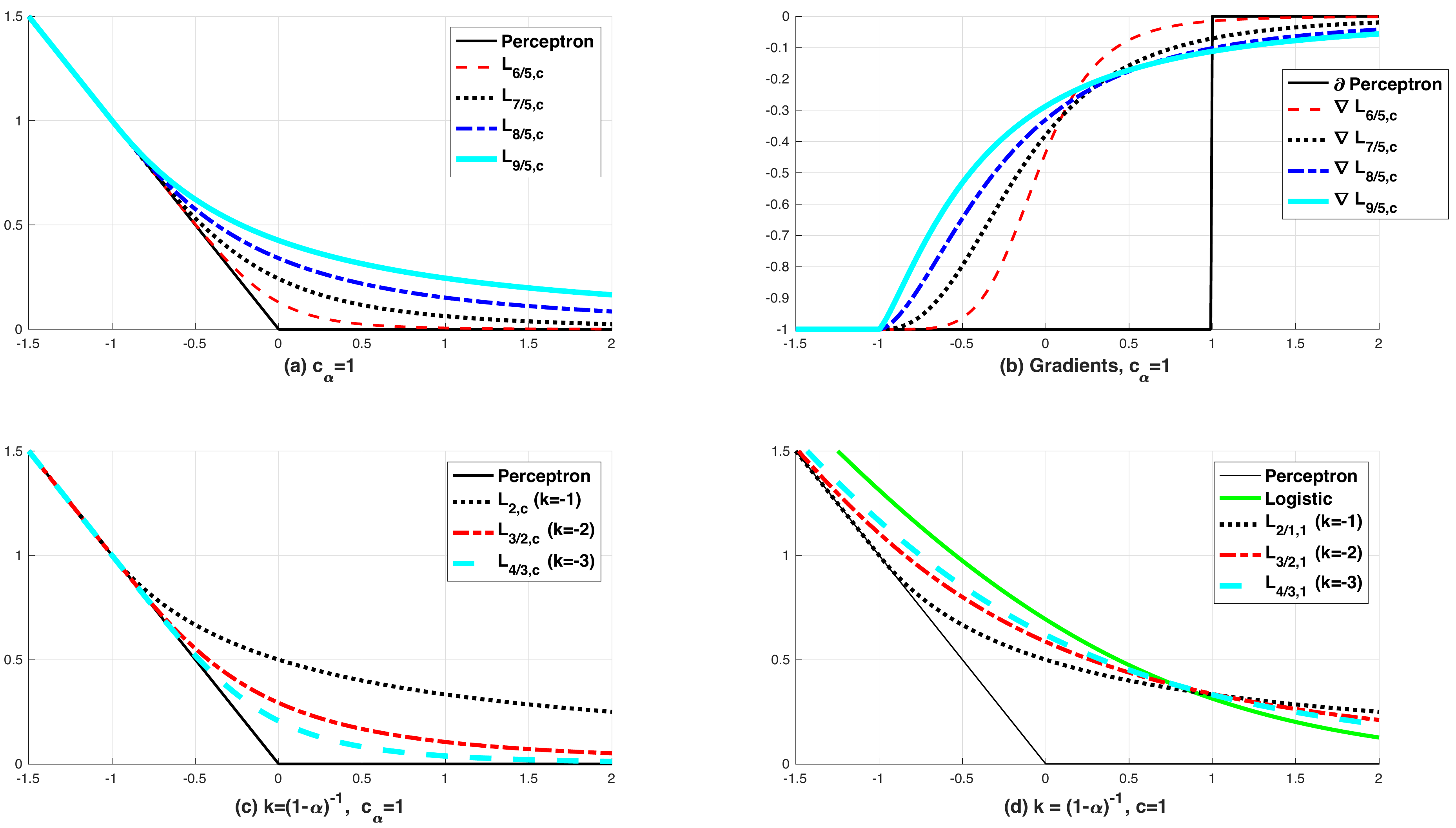}
\caption{Comparison of the Logitron loss with $\alpha>1$ and the well-known Perceptron loss in \eqref{maxloss} and logistic loss in \eqref{clogistic}. (a) shows the relation between the Hinge-Logitron with $\alpha = i/5, \; i=6,7,8,9$ and $c_{\alpha}=1$ and the Perceptron loss in \eqref{maxloss}. (b) shows the gradient (or subgradient) of the loss functions in (a). (c) shows the relation between the Hinge-Logitron with $c_{\alpha}=1$ and $\alpha=2,3/2,4/3$ (i.e., $k$-th order hinge loss with $k$-th root stabilizing function. Here $k=-1,-2,-3$) and the Perceptron loss function. (d) shows the relation between Logistic-Logitron with $c=1$ and $\alpha=2, 3/2,4/3$ and the logistic loss and the Perceptron loss function. When we set $c_{\alpha}=1$, the Logitron loss with $\alpha>1$ is a kind of negative higher-order hinge loss with stabilizer and thus the meaning of the margin is not that of the classic hinge loss. In fact, the margin is located on the negative Perceptron line.  On the other hand, when we set $c=1$, like the Logitron with $\alpha<1$, the Logitron with $\alpha>1$ also approximately converges to the logistic loss function as $\alpha \rightarrow 1$ (or $k \rightarrow -\infty$).}
\label{fig:img4}
\end{figure*}

\section{Logitron: An extended Logistic regression classification model augmented with the Perceptron\label{sec3}}
This Section introduces a general classification framework. That is, the Logitron classification model with the Perceptron-augmented extended logistic loss function. 


Let us start with the extended logistic loss function, which is a simple combination of $\exp_{\alpha,c}$ and $\ln_{\alpha,c}$ in the logistic regression style. In fact, it covers many loss functions appearing in classification such as exponential loss, (extended) sigmoid function, the Savage loss function and so on.
\begin{definition}\label{extlossDef}
Let $\alpha,\beta \in \R_+$ and $c \in \R_{++}$. Then the extended logistic loss function $\ell_{\alpha,\beta,c} : \dom(\ell_{\alpha,\beta,c}) \rightarrow \R$ is defined as
\begin{equation}\label{extloss}
\ell_{\alpha,\beta, c}(x) = \ln_{\alpha,c}(c + \exp_{\beta,c}(-x))
\end{equation}
where $\dom(\ell_{\alpha,\beta,c}) = \{ x \in \R \;|\;  -x \in \dom(\exp_{\beta,c})\; \}$.  Note that $\dom(\exp_{\beta,c})$ is the restricted domain in Table \ref{table7}. 
\end{definition}
By virtue of Lemma \ref{lemmax}, the extended logistic loss in \eqref{extloss} is well defined with the restricted domain in Table \ref{table7}, irrespective of choices of $\alpha,\beta \in \R_+$ and $c \in \R_{++}$. The classic logistic loss~\eqref{clogistic} is recovered when we set $\alpha=\beta=1$, irrespective of the choice of the auxiliary parameter $c$. 
Since we do not put any constraints on $\alpha$ and $\beta$, it is questionable when the extended logistic loss $\ell_{\alpha,\beta,c}(x)$~\eqref{extloss} is acting like the conventional logistic loss function~\eqref{clogistic}. The following theorem gives a partial answer in terms of convexity of $\ell_{\alpha,\beta,c}$~\eqref{extloss}.
\begin{theorem}
Let $\alpha,\beta \in \R_+$ with $\beta \ge \alpha$ and $c \in \R_{++}$. Then the extended logistic loss function $\ell_{\alpha,\beta,c}$~\eqref{extloss} is convex on $\dom(\ell_{\alpha,\beta,c}) = \{ x \in \R \;|\; -x \in \dom(\exp_{\beta,c}) \}$. 
\end{theorem}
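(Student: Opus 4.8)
The plan is to prove convexity by showing that the second derivative of $\ell_{\alpha,\beta,c}$ is nonnegative on the interior of $\dom(\ell_{\alpha,\beta,c})$, and then to promote convexity to the boundary exactly as in the proof of Proposition~\ref{propexp}. The whole argument rests on one clean differential identity for the extended exponential: differentiating the closed form \eqref{exexp} gives $\exp_{\beta,c}'(v) = \bigl(\exp_{\beta,c}(v)\bigr)^{\beta}$ for every $\beta \in \R_+$ (the case $\beta=1$ recovering $\tfrac{d}{dv}(ce^{v})=ce^{v}$). I would verify this identity first, since it makes every subsequent derivative transparent and reduces the computation to elementary algebra; the analogous fact $\ln_{\alpha,c}'(u)=u^{-\alpha}$ follows just as directly from \eqref{exlog}.

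Writing $e = \exp_{\beta,c}(-x)$ and $u = c + e$, so that $\ell_{\alpha,\beta,c}(x) = \ln_{\alpha,c}(u)$, the two identities give $u'(x) = -e^{\beta}$ and $u''(x) = \beta\, e^{2\beta-1}$, and the chain rule yields
\[
\ell_{\alpha,\beta,c}''(x) = -\alpha\, u^{-\alpha-1}(u')^{2} + u^{-\alpha}u'' = u^{-\alpha-1}e^{2\beta-1}\bigl[(\beta-\alpha)e + \beta c\bigr].
\]
The cases $\alpha=1$ and $\beta=1$ produce the same factorization in the limit and can be checked directly. The point of writing $\ell_{\alpha,\beta,c}''$ in this factored form is that its sign is carried entirely by the bracket $(\beta-\alpha)e + \beta c$.

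Next I would check positivity factor by factor. By Table~\ref{table7} and Lemma~\ref{lemmax} the range of $\exp_{\beta,c}$ equals $\dom(\ln_{\alpha,c})\subseteq\R_{+}$, so $e \ge 0$; hence $u = c+e > 0$ gives $u^{-\alpha-1}>0$, and $e^{2\beta-1}>0$ on the interior where $e>0$. The bracket is exactly where the hypothesis $\beta\ge\alpha$ enters: since $\beta-\alpha\ge 0$, $e\ge 0$, and $\beta c\ge 0$ with $c>0$, we obtain $(\beta-\alpha)e + \beta c \ge 0$. Therefore $\ell_{\alpha,\beta,c}''\ge 0$ on the interior, which gives convexity there, and I would extend this to the closed (lower semicontinuous) domain by continuity as in Proposition~\ref{propexp}.

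I expect the main obstacle to be the boundary bookkeeping rather than the core inequality. Depending on $\beta$, the factor $e^{2\beta-1}$ may blow up as $e\to 0^{+}$ (when $\beta<\tfrac12$), while for $\beta>1$ the value $e$ itself tends to $+\infty$ at one end of $\dom(\exp_{\beta,c})$. Neither phenomenon breaks convexity, because the bracket remains nonnegative and $\ell_{\alpha,\beta,c}$ stays finite wherever it is defined; in particular the finite boundary point corresponds to $e=0$, hence $u=c$ and $\ell_{\alpha,\beta,c}=\ln_{\alpha,c}(c)=0$, so the extension argument applies cleanly. The structural role of $\beta\ge\alpha$ is precisely to keep the $(\beta-\alpha)e$ term from turning negative for large $e$, which is what would otherwise destroy convexity, so I would be careful to flag that this single inequality is where the hypothesis is indispensable.
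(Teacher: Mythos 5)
Your proof is correct and takes essentially the same route as the paper: your factored second derivative $u^{-\alpha-1}e^{2\beta-1}\left[(\beta-\alpha)e+\beta c\right]$ is algebraically identical to the paper's expression $\left[\beta(c+h)-\alpha h\right]h^{2\beta-1}/(c+h)^{\alpha+1}$, and your sign analysis via $\beta\ge\alpha$ followed by extension of convexity to the boundary by a continuity/limiting argument is exactly the paper's proof. The only difference is presentational: you make explicit the identity $\exp_{\beta,c}'(v)=\bigl(\exp_{\beta,c}(v)\bigr)^{\beta}$ underlying the computation, which the paper leaves implicit.
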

\begin{proof}
Let us assume that $p \in int(\dom(\ell_{\alpha,\beta,c}))$. Then $h(p)=\exp_{\beta,c}(-p)>0$ and $\frac{h(p)}{c + h(p)} \in (0,1)$. For all $\beta \ge \alpha \ge 0$, we have
\begin{equation*}
\frac{d^2\ell_{\alpha,\beta,c}}{dp^2} = \left[\beta(c+h(p)) - \alpha h(p) \right]\frac{h(p)^{2\beta-1}}{(c+h(p))^{\alpha+1}} \ge 0
\end{equation*}
Now, we only need to extend convexity to the boundary $\dom(\ell_{\alpha,\beta,c})\cap bd(\dom(\ell_{\alpha,\beta,c})).$ From Table \ref{table7}, we have $-c_{\beta}\in \dom(\ell_{\alpha,\beta,c})\cap bd(\dom(\ell_{\alpha,\beta,c}))$ when $0 \le \beta < 1$. In fact, from the convexity of $\ell_{\alpha,\beta,c}$, we have
$$
\ell_{\alpha,\beta,c}(\lambda a+ (1-\lambda)b) \le \lambda \ell_{\alpha,\beta,c}(a) + (1-\lambda)\ell_{\alpha,\beta,c}(b)
$$
where $\lambda \in [0,1]$, $a<b$ and $a,b \in int(\dom(\ell_{\alpha,\beta,c})).$ By sending $b \rightarrow -c_{\beta}$, we can easily extend convexity up to the $\hbox{dom}(\ell_{\alpha,\beta,c})$.
\end{proof}
Although the nonconvex extended logistic loss ($\beta<\alpha$) is not main concern of this work, it is worth mentioning about the nonconvex loss function. As observed in \cite{freund09}, a nonconvex loss has some advantages in terms of robustness against the label noise. Actually, various nonconvex loss functions are proposed in boosting~\cite{mason00,freund09,nguyen13,masnadi08}, most of them are a subclass of the extended logistic loss. 
In the following example, we demonstrate higher-order sigmoid function which is a typical example of the nonconvex extended logistic loss function~\eqref{extloss}. They are known as the robust loss function in boosting~\cite{masnadi08}  or activation function~\cite{collobert04,rosenblatt57} in (multilayer Perceptron) neural network.
\begin{example}[{\bf higher-order sigmoid function}]
Let us consider the extended logistic loss function with $\alpha=m+1$ ($m \in \mathbb{N}$) and $\beta=1$ (higher-order sigmoid function):
\begin{equation}\label{hsigmoid}
\ell_{m+1,1,c}(z) = c_{m+1}\left(1 - \frac{1}{[1+\exp(-z)]^{m}}\right) = c_{m+1}(1-\sigma^m(z))
\end{equation}
where $\sigma(z) = \frac{1}{1 + \exp(-z)}$ is a sigmoid function and $c_{m+1} = \frac{c^{-m}}{m}$. Note that $\ell_{m+1,1,c}(z) \in (0,c_{m+1})$ for all $z \in \R$. In fact, the Savege loss function~\cite{masnadi08} is the second-order sigmoid function ($\alpha=3$) and the activation function in multilayer Perceptron neural network~\cite{collobert04,rosenblatt57} is the first-order sigmoid  function ($\alpha=2$).
\begin{itemize}
\item First-order sigmoid ($\alpha=2,\beta=1$): $
\ell_{2,1,c}(z) = \sigma(-z)$ where $c=1$.
\item Second-order sigmoid ($\alpha=3,\beta=1$):
$\ell_{3,1,c}(z) = 1-\sigma^2(z)$
where $c=0.5^{0.5}$, $z \in \R$ and $\ell_{3,1,c}(z) \in (0,1)$ for all $z \in \R$. In \cite{masnadi08}, authors have introduced $\sigma^2(-z) = 1 - \ell_{3,1,c}(-z)$ as the Savage loss function in boosting. This model is known to be more robust to label noise compared to other boosting models having convex loss functions such as Adaboost~\cite{freund97} and LogitBoost~\cite{friedman00}. However, within the convex loss function, the LogitBoost with logistic loss is more robust than the Adaboost with the exponential loss~\cite{friedman00}. 
\end{itemize}
\end{example}
Since we are mainly interested in convex loss function, having similar features of the loss functions used in logistic regression and SVM, we restrict the extended logistic loss function~\eqref{extloss} by the following condition.
\begin{equation}\label{assumptionA}
\alpha = \beta \ge 0
\end{equation}
Now, let us simplify the notation of the extended logistic regression function with the extended-valued function by
\begin{equation}\label{extloss2}
\boxed{
\ell_{\alpha,c}(x) = \left\{ 
\begin{array}{l}
\ln_{\alpha,c}(c + \exp_{\alpha,c}(-x)) \quad \hbox{ if } x \in \dom(\ell_{\alpha,c})\\
+\infty \hskip 3.0cm \hbox{ otherwise }
\end{array}
\right.
}
\end{equation}
Here $\alpha \in \R_{+}$ is a model parameter and $c \in \R_{++}$ is a margin parameter. As observed in Figure \ref{fig:img3} and \ref{fig:img4}, the search space of two parameters are significantly reduced and thus they are not a big burden while running the cross-validation. The only concern of \eqref{extloss2} is that the domain $\dom(\ell_{\alpha,c}) = \{ x \in \R \;|\; -x \in \dom(\exp_{\alpha,c})\}$ depends on $c_{\alpha}$ (see Table \ref{table7}). This is definitely a barrier  for various applications appearing in machine learning. However, interestingly, the domain dependency problems of \eqref{extloss2} could be easily escaped by using the Perceptron loss function. We call the Perceptron-augmented loss function of \eqref{extloss2} as the Logitron loss function and the corresponding minimization model for classification as Logitron. The details are following.
\begin{definition}[{\bf Logitron}]\label{Def:Logitron}
Let $(x_1,y_1), ... , (x_N,y_N) \in {\cal X} \times {\cal Y}$ be the given training dataset. Here ${\cal X} = \{ x \in \R^n \;|\; \norm{x}_{\infty} \le R_{\cal X} \}$, ${\cal Y} = \{-1,+1\}$, and $R_{\cal X} \in \R_{++}$. Also, we set $(\alpha,c) \in \R_+\times\R_{++}$. Then we have the Logitron model:
\begin{equation}\label{LogiT}
\min_{f \in {\cal F}}\;\sum_{i=1}^N L_{\alpha,c}(y_if(x_i))  
\end{equation}
where ${\cal F}$ is an appropriate function space such as~\eqref{funcSpace} and $L_{\alpha,c} : \R \rightarrow \R_{+}$ is the Logitron loss function defined by
\begin{equation}\label{LogiTloss}
L_{\alpha,c}(z) = 
\left\{ 
\begin{array}{l}
\ell_{\alpha,c}(z) \qquad \hbox{ if } z \in \dom(\ell_{\alpha,c})\\
\ell_P(z) \qquad\; \hbox{ otherwise }
\end{array}
\right.
\end{equation}
where $\ell_P(z) = \max(0,-z)$ is the Perceptron loss function in~\eqref{maxloss} and $\dom(\ell_{\alpha,c}) = \{ z \in \R \;|\; -z \in \dom(\exp_{\alpha,c})\}.$
\end{definition}
Since the Perceptron loss is added to the extended logistic loss function, we have $\emptyset = \{ z \in \R  \;|\; \ell_{\alpha,c}(z) = +\infty \;\}$.  That is, the domain of the Logitron is the entire real line. 
Moreover, the Logitron loss is continuously twice differentiable on its entire domain $\R$ under the mild condition. See also Figure \ref{fig:img3} and \ref{fig:img4} for the graph of the Logistic loss and the gradient of it.
\begin{theorem}\label{chLogitron}
Let $c \in \R_{++}$. Then the Logitron loss function $L_{\alpha,c} : \R \rightarrow \R_{+}$~\eqref{LogiTloss} is convex and continuous for all $\alpha \in \R_{+}$. When $\alpha \in \R_{++}$, it is continuously differentiable. Moreover, if $\alpha \in (0.5,2)$ then it is continuously twice differentiable.  
\end{theorem}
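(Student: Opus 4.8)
The plan is to treat $L_{\alpha,c}$ as a piecewise function glued at the single boundary point $z_0 = -c_\alpha$ of $\dom(\ell_{\alpha,c})$ and to reduce every claim to matching conditions at $z_0$. On the interior of $\dom(\ell_{\alpha,c})$ the extended logistic piece $\ell_{\alpha,c}$ is smooth and, by the preceding theorem specialized to $\beta=\alpha$, convex; on the complementary region the Perceptron piece $\ell_P(z)=\max(0,-z)$ is convex and piecewise linear. First I would record that the only corner of $\ell_P$, at $z=0$, never lies inside the region where the Perceptron piece is actually used: from Table~\ref{table7}, when $0\le\alpha<1$ we have $z_0=-c_\alpha>0$ and the Perceptron piece governs $z>z_0>0$ (where $\ell_P\equiv 0$), while when $\alpha>1$ we have $z_0<0$ and it governs $z<z_0<0$ (where $\ell_P(z)=-z$). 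Thus all possible nonsmoothness is concentrated at $z_0$, and the case $\alpha=1$ is trivial since then $\dom(\ell_{\alpha,c})=\R$ and $L_{\alpha,c}$ is the ordinary smooth, convex logistic loss.

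Next I would compute the derivatives of the logistic piece in a form convenient for the limit analysis. Writing $h(z)=\exp_{\alpha,c}(-z)$ and using the differential relations $\tfrac{d}{dv}\exp_{\alpha,c}(v)=\exp_{\alpha,c}(v)^{\alpha}$ and $\tfrac{d}{du}\ln_{\alpha,c}(u)=u^{-\alpha}$, the chain rule gives
\begin{equation*}
\ell_{\alpha,c}'(z) = -\left(\frac{h(z)}{c+h(z)}\right)^{\alpha}, \qquad \ell_{\alpha,c}''(z) = \alpha c\,\frac{h(z)^{2\alpha-1}}{(c+h(z))^{\alpha+1}},
\end{equation*}
the second being the preceding theorem's second-derivative formula specialized to $\beta=\alpha$. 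The crucial boundary behavior of $h$ then follows from \eqref{exexp}: as $z\to z_0$ from inside $\dom(\ell_{\alpha,c})$ one has $h\to 0^+$ when $0\le\alpha<1$ and $h\to+\infty$ when $\alpha>1$.

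With these in hand, continuity and convexity are immediate. The limit of $\ell_{\alpha,c}$ at $z_0$ is $0$ when $\alpha<1$ (matching $\ell_P(z_0)=0$, since $z_0>0$) and $c_\alpha$ when $\alpha>1$ (matching $\ell_P(z_0)=c_\alpha$, since $z_0<0$), so $L_{\alpha,c}$ is continuous. With both pieces convex and continuously joined, convexity on all of $\R$ follows once the left-hand derivative at $z_0$ does not exceed the right-hand one, so that $L_{\alpha,c}'$ is nondecreasing through $z_0$; from the formula for $\ell_{\alpha,c}'$ the logistic one-sided slope is $0$ when $0<\alpha<1$ (and $-1$ at $\alpha=0$) and $-1$ when $\alpha>1$, which in each case meets the Perceptron slope with left $\le$ right. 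For $C^1$ one needs \emph{equality} of these one-sided slopes: it holds for every $\alpha>0$ (both $0$, resp. both $-1$), whereas at $\alpha=0$ the logistic slope $-1$ fails to match the Perceptron slope $0$, leaving a corner --- exactly why $C^1$ requires $\alpha\in\R_{++}$.

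The delicate part, and the main obstacle, is the $C^2$ statement, where the thresholds $0.5$ and $2$ must be extracted from the limit of $\ell_{\alpha,c}''$ at $z_0$ against $\ell_P''\equiv 0$. For $0\le\alpha<1$ (so $h\to 0^+$) the factor $h^{2\alpha-1}$ dominates and $\ell_{\alpha,c}''(z)\to 0$ precisely when $2\alpha-1>0$, i.e. $\alpha>0.5$; for $\alpha>1$ (so $h\to+\infty$) the net power is $h^{\alpha-2}$ and the limit is $0$ precisely when $\alpha<2$. Hence the second derivatives match across $z_0$ exactly for $\alpha\in(0.5,1)\cup(1,2)$, and together with the $C^\infty$ case $\alpha=1$ this yields continuous twice differentiability on all of $(0.5,2)$; the endpoints $\alpha\in\{0.5,2\}$ produce a jump in $\ell_{\alpha,c}''$ and values outside the interval produce blow-up, confirming the range is sharp. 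I would carry out the $\alpha<1$ and $\alpha>1$ computations in parallel, since they are mirror images under the $h\to 0$ versus $h\to\infty$ dichotomy.
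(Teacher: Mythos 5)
Your proposal is correct and follows essentially the same route as the paper's proof: both analyses glue the two pieces at $z_0=-c_\alpha$, use the same derivative formulas (with $h=\exp_{\alpha,c}(-z)$) $\ell'_{\alpha,c}(z)=-\bigl(\tfrac{h}{c+h}\bigr)^{\alpha}$ and $\ell''_{\alpha,c}(z)=\alpha c\, h^{2\alpha-1}/(c+h)^{\alpha+1}$, and extract continuity, the $C^1$ threshold $\alpha>0$, and the $C^2$ window $\alpha\in(0.5,2)$ from the limits $h\to 0^+$ (for $\alpha<1$) versus $h\to+\infty$ (for $\alpha>1$) at the junction, exactly as the paper does with its extended-valued arithmetic. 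The one refinement worth noting is your explicit check that the one-sided slopes are ordered (left $\le$ right) at $z_0$, which is the step that actually yields convexity of the glued function $L_{\alpha,c}$ --- including at $\alpha=0$, where it has a corner; the paper proves convexity of each piece separately but leaves this gluing argument implicit.
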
 
\begin{proof}
When $\alpha=1$, we get $\ell_{\alpha,c} = \ell_L$ the logistic loss in~\eqref{clogistic}. Thus, $\ell_{L}(z) \cap \ell_P(z) = \emptyset$ for all $z \in \R$ and $\ell_L(z)$ is infinitely differentiable on $\R$. Let us consider $\alpha \in \R_{+} \setminus \{ +1 \}$. 

Firstly, for the continuity of $L_{\alpha,c}$, we only need to show that
$$
\ell_{\alpha,c}(z) \cap \ell_P(z) =   
\left\{\begin{array}{l} 
 \emptyset   \hskip 1.5cm \hbox{ if } z \in int(\Omega) \\
-z \;\hbox{ or } \; 0  \quad \hbox{ if } z \in bd(\Omega) 
\end{array}
\right.
$$
where $-z \in \R_{++}$ and $\Omega = dom(\ell_{\alpha,c})$. 
\begin{itemize}
\item $0 \le \alpha<1$: We have $-c_{\alpha}>0$ and $dom(\ell_{\alpha,c}) = \R_{\le -c_{\alpha}}$. Thus, $bd(\Omega) = \{ -c_{\alpha} \}$. Therefore, we get
$
\ell_{\alpha,c}(-c_{\alpha}) = \ln_{\alpha,c}(c + \exp_{\alpha,c}(c_{\alpha})) = 0 = \ell_P(-c_{\alpha}) .
$
Additionally, since $\exp_{\alpha,c}$ is strictly increasing and convex (Proposition \ref{propexp} (1) and (2)), we have $\varepsilon = \exp_{\alpha,c}(-z) > 0$ for all $z \in int(\Omega) \cap \R_+$. Therefore, since $\ln_{\alpha,c}$ is strictly increasing and $\ln_{\alpha,c}(c)=0$,  we have $\ell_{\alpha,c}(z) = \ln_{\alpha,c}(c + \varepsilon) >0$. Additionally, for all $z \in int(\Omega) \cap \R_-$, we have $-z < \ell_{\alpha,c}(c + \exp_{\alpha,c}(-z))$ from $\exp_{\alpha,c}(-z) < c + \exp_{\alpha,c}(-z)$.
\item $\alpha>1$: We have $c_{\alpha}>0$ and $dom(\ell_{\alpha,c}) = \R_{> -c_{\alpha}}$. Thus  $bd(\Omega) = \{ -c_{\alpha} \}$. Since $bd(\Omega) \cap \Omega = \emptyset$, we need to be cautious on the boundary point. From the extended-valued real number system, we have $(c + \infty)_{\alpha} = 0$ and thus 
\begin{eqnarray*}
\ell_{\alpha,c}(-c_{\alpha}) = \ln_{\alpha,c}(c + \exp_{\alpha,c}(c_{\alpha}))
= \ln_{\alpha,c}(c + \infty)
= c_{\alpha}
\end{eqnarray*}
Note that it is easy to check that $\exp_{\alpha,c}(-z) > 0$ for all $z \in int(\Omega) \cap \R_+$ and 
$-z < \ell_{\alpha,c}(c + \exp_{\alpha,c}(-z))$ for all $z \in int(\Omega) \cap \R_-$.
\end{itemize}

Secondly, we will show continuously differentiability of the Logitron, $L_{\alpha,c}(z)$ on its entire domain $\R$. 
\begin{itemize}
\item $0 < \alpha<1$: $dom(\ell_{\alpha,c}) = \R_{ \le - c_{\alpha}}$ and $bd(\dom(\ell_{\alpha,c})) = \{ -c_{\alpha} \}$. By simple calculation, we have
\begin{equation}\label{temp1}
L'_{\alpha,c}(z) = 
\left\{\begin{array}{l} \ell'_{\alpha,c}(z)
\quad\hbox{ if } z \in int(\dom(\ell_{\alpha,c}))\\
0  \hskip 1.4cm \hbox{ if } z \in \R \setminus \dom(\ell_{\alpha,c})
\end{array}\right.
\end{equation}
where $\ell'_{\alpha,c}(z) = -\left(\frac{\exp_{\alpha,c}(-z)}{c + \exp_{\alpha,c}(-z)}\right)^{\alpha}<0,\; \forall z \in int(\dom(\ell_{\alpha,c})).$  Since $\alpha \in (0,1)$, as $z \rightarrow -c_{\alpha} \in bd(\dom(\ell_{\alpha,c}))$, we get $\ell'_{\alpha,c}(z) \rightarrow 0$ and $\ell'_P(-c_{\alpha}) = 0$. Therefore, $L'_{\alpha,c}(z)$ is well defined for all $z \in \R$.
\item $\alpha>1$: $dom(\ell_{\alpha,c}) = \R_{> -c_{\alpha}}$ and $c_{\alpha}>0$. Since $int(\dom(\ell_{\alpha,c})) = \dom(\ell_{\alpha,c})$, we have
\begin{equation}\label{temp2}
L'_{\alpha,c}(z) = 
\left\{\begin{array}{l} 
\ell'_{\alpha,c}(z)
\quad\hbox{ if } z \in \dom(\ell_{\alpha,c})\\
-1  \hskip 1.2cm \hbox{ otherwise}
\end{array}\right.
\end{equation}
where $\ell'_{\alpha,c}(z) = -\left(\frac{\exp_{\alpha,c}(-z)}{c + \exp_{\alpha,c}(-z)}\right)^{\alpha}>-1$ for all $z \in \dom(\ell_{\alpha,c})$. On the other hand, since $\exp_{\alpha,c}(c_{\alpha}) = \infty$ at $-c_{\alpha} \in bd(\dom(\ell_{\alpha,c}))$, we have $\ell'_{\alpha,c}(-c_{\alpha}) = \left(\frac{\infty}{c+\infty}\right)^{\alpha} = -1$. 
\end{itemize}

Thirdly, for continuously twice differentiability of the Logitron loss, 
let us take the second derivative of $\ell_{\alpha,c}$. Then, $\forall\alpha \in (0.5,2)$ and $z \in int(\dom(\ell_{\alpha,c}))$ 
$$
\ell''_{\alpha,c}(z) = c\alpha \exp_{\alpha,c}(-z)^{\alpha-2}  \left(\frac{\exp_{\alpha,c}(-z)}{c + \exp_{\alpha,c}(-z)} \right)^{\alpha+1} > 0.
$$
Let us consider the case $\alpha \in (0.5,1)$ and $(1,2)$.
\begin{itemize}
\item $0.5<\alpha<1$: From $bd(\dom(\ell_{\alpha,c}))  \cap \dom(\ell_{\alpha,c}) = \{ -c_{\alpha}\}$, we get
$$
\ell''_{\alpha,c}(-c_{\alpha}) =\frac{ c\alpha \exp^{2\alpha-1}_{\alpha,c}(c_{\alpha})}{(c+\exp_{\alpha,c}(c_{\alpha}))^{\alpha+1}} = 0
$$
\item $1<\alpha<2$: From $bd(\dom(\ell_{\alpha,c})) = \{ -c_{\alpha} \}$ and $\exp_{\alpha,c}(z) = c (1- \frac{z}{c_{\alpha}})^{1/(1-\alpha)}$, we have
$
\exp_{\alpha,c}(-z)^{\alpha-2} = c^{\alpha-2}\left(1 - \frac{-z}{c_{\alpha}}\right)^{\frac{\alpha-2}{1-\alpha}}.
$
Thus, $\ell''_{\alpha,c}(-c_{\alpha}) =  0$.
\end{itemize}
Additionally, it is trivial that $\ell_P''(z) = 0,\; \forall  z \in \R \setminus \{ 0 \}$. Finally, $\forall\alpha \in (0.5,2)$ and $\forall z \in \R$, we get the continuous second derivative of the Logitron loss function
$$
L''_{\alpha,c}(z) = 
\left\{\begin{array}{l}
\ell''_{\alpha,c}(z) \quad \hbox{ if } z \in \dom(\ell_{\alpha,c})\\
0 \hskip 1.4cm \hbox{ otherwise}
\end{array}\right.
$$
\end{proof}
Due to the Theorem \ref{chLogitron}, the Logitron loss function can be used as a classification loss function for all $\alpha \in \R_+$ and $c \in \R_{++}$. In fact, it is classification-calibrated~\cite{bartlett06}. 
\begin{corollary}\label{calibration}
For all $\alpha \in \R_+$ and $c \in \R_{++}$, the Logitron loss function $L_{\alpha,c}(z)$ is classification-calibrated~\cite{bartlett06}.
\end{corollary}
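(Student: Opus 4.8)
The plan is to reduce the statement to the well-known characterization of classification-calibration for convex margin losses established in \cite{bartlett06}: a convex loss $\phi$ is classification-calibrated if and only if it is differentiable at $0$ with $\phi'(0)<0$. Since Theorem \ref{chLogitron} already guarantees that $L_{\alpha,c}$ is convex for every $\alpha\in\R_+$ and $c\in\R_{++}$, the whole corollary collapses to an inspection of $L_{\alpha,c}$ at the single point $z=0$, and in particular it is not necessary to establish any global smoothness.

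First I would verify that $0$ is always an interior point of the extended-logistic branch, so that the Perceptron branch plays no role near $0$. Using $c_{\alpha}=\tfrac{1}{\alpha-1}c^{1-\alpha}$ from Table \ref{table7}, one has $-c_{\alpha}>0$ when $0\le\alpha<1$ (where $\dom(\ell_{\alpha,c})=\R_{\le -c_{\alpha}}$) and $-c_{\alpha}<0$ when $\alpha>1$ (where $\dom(\ell_{\alpha,c})=\R_{> -c_{\alpha}}$); in either case $0\in int(\dom(\ell_{\alpha,c}))$, and for $\alpha=1$ the domain is all of $\R$. Consequently, for every $\alpha\in\R_{++}$ the Logitron loss agrees with $\ell_{\alpha,c}$ in a neighborhood of $0$ and is therefore differentiable there by Theorem \ref{chLogitron}. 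To evaluate the derivative I would use the expression $\ell'_{\alpha,c}(z)=-\bigl(\tfrac{\exp_{\alpha,c}(-z)}{c+\exp_{\alpha,c}(-z)}\bigr)^{\alpha}$ recorded in the proof of Theorem \ref{chLogitron}, together with the value $\exp_{\alpha,c}(0)=c$ read off directly from \eqref{exexp}; this yields
\[
L'_{\alpha,c}(0)=-\Bigl(\tfrac{c}{2c}\Bigr)^{\alpha}=-2^{-\alpha}<0,
\]
which is exactly the required sign condition.

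The remaining case is $\alpha=0$, where Theorem \ref{chLogitron} only asserts continuity and not global differentiability. Here I would argue directly: a short computation with \eqref{exlog} and \eqref{exexp} gives $\exp_{0,c}(v)=c+v$ and $\ln_{0,c}(u)=u-c$, so $\ell_{0,c}(z)=c-z$ on $\R_{\le c}$, and stitching with $\ell_P$ shows $L_{0,c}(z)=\max(0,c-z)$, a hinge loss with margin $c>0$. Its only kink sits at $z=c>0$, hence $L_{0,c}$ is still differentiable at $0$ with slope $-1=-2^{0}$, consistent with the formula above; convexity and $L'_{0,c}(0)<0$ then supply the same conclusion via \cite{bartlett06}.

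The main obstacle is conceptual rather than computational: classification-calibration through \cite{bartlett06} is a purely local condition at $0$, so the delicate point is to confirm that the margin parameter always places the Perceptron kink strictly away from the origin (at $z=-c_{\alpha}$ when $\alpha\in\R_{++}\setminus\{1\}$, and at $z=c$ when $\alpha=0$, both carrying the correct sign), leaving the smooth extended-logistic branch to govern the behavior at $0$. Once $0\in int(\dom(\ell_{\alpha,c}))$ is secured and $\exp_{\alpha,c}(0)=c$ is used, the negativity $L'_{\alpha,c}(0)=-2^{-\alpha}<0$ is immediate, and combining it with the convexity supplied by Theorem \ref{chLogitron} completes the proof for every $\alpha\in\R_+$ and $c\in\R_{++}$.
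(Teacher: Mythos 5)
Your proof is correct and takes essentially the same route as the paper: both reduce the claim to the Bartlett--Jordan--McAuliffe criterion for convex losses (differentiability at $0$ with negative derivative), use Theorem \ref{chLogitron} for convexity, and compute $L'_{\alpha,c}(0) = -\left(\frac{\exp_{\alpha,c}(0)}{c+\exp_{\alpha,c}(0)}\right)^{\alpha} = -2^{-\alpha} < 0$ from $\exp_{\alpha,c}(0)=c$. If anything, your write-up is slightly more careful than the paper's: you explicitly check that $0 \in int(\dom(\ell_{\alpha,c}))$ so the smooth branch governs the behavior at the origin, and you treat $\alpha=0$ directly (showing $L_{0,c}(z)=\max(0,c-z)$ has its kink at $z=c>0$), whereas the paper asserts differentiability at $0$ for all $\alpha\in\R_+$ even though Theorem \ref{chLogitron} only guarantees it for $\alpha\in\R_{++}$.
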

\begin{proof}
From Theorem \ref{chLogitron}, $L_{\alpha,c}(z)$ is convex and differentiable at $z=0$ for all $\alpha \in \R_+$.
$$
L'_{\alpha,c}(0)= 
\left\{\begin{array}{l}  
-\left(\frac{\exp_{\alpha,c}(0)}{c + \exp_{\alpha,c}(0)}\right)^{\alpha} = -\frac{1}{2^{\alpha}} , \qquad \hbox{ if } \alpha>0\\
-1 \hskip 4.1cm \hbox{ if } \alpha=0
\end{array}\right.
$$
Therefore, we have $L'_{\alpha,c}(0) = - \frac{1}{2^{\alpha}} < 0$ for all $\alpha \ge 0$. Hence $L_{\alpha,c}(z)$ is classification-calibrated, irrespective of the choice of $c \in \R_{++}$. 
\end{proof}
Additionally, the Logitron loss function~\eqref{LogiTloss} is sufficiently smooth. That is, the gradient of it is continuous on its entire domain $\R$ and bounded by one. Therefore, we could use any gradient-based optimization method such as L-BFGS~\cite{mark19}. 
\begin{corollary}\label{Lipschitz}
For all $\alpha \in \R_+$ and $c \in \R_{++}$, the Logitron loss function $L_{\alpha,c}(z)$ is Lipschitz continuous with Lipschitz constant one for all $z \in \R$. That is, we have
\begin{equation}
\abs{L_{\alpha,c}(z_1) - L_{\alpha,c}(z_2)} \le \abs{z_1 - z_2}
\end{equation}
for all $z_1, z_2 \in \R$.
\end{corollary}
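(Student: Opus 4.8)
The plan is to reduce the Lipschitz estimate to a uniform bound on the derivative. For a continuously differentiable function on $\R$, the fundamental theorem of calculus gives $\abs{L_{\alpha,c}(z_1)-L_{\alpha,c}(z_2)} = \abs{\int_{z_2}^{z_1} L'_{\alpha,c}(t)\,dt} \le \abs{z_1-z_2}\,\sup_t \abs{L'_{\alpha,c}(t)}$, so it suffices to prove $\abs{L'_{\alpha,c}(z)} \le 1$ for every $z \in \R$. By Theorem~\ref{chLogitron} the loss is $C^1$ on all of $\R$ whenever $\alpha \in \R_{++}$, so this reduction is legitimate for positive $\alpha$; the degenerate case $\alpha=0$, where $L_{\alpha,c}$ is only continuous, I would treat separately at the end.

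The core step is the derivative bound. Using the formulas already derived in the proof of Theorem~\ref{chLogitron}, on $int(\dom(\ell_{\alpha,c}))$ one has $\ell'_{\alpha,c}(z) = -\left(\frac{\exp_{\alpha,c}(-z)}{c + \exp_{\alpha,c}(-z)}\right)^{\alpha}$. The key observation is that the base of this power always lies in $(0,1)$: since $c \in \R_{++}$ and $\exp_{\alpha,c}(-z) \ge 0$ on the domain (Proposition~\ref{propexp}), the single inequality $0 < \exp_{\alpha,c}(-z) \le c + \exp_{\alpha,c}(-z)$ forces $0 < \frac{\exp_{\alpha,c}(-z)}{c + \exp_{\alpha,c}(-z)} < 1$. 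Raising a number in $(0,1)$ to any positive power $\alpha$ keeps it in $(0,1)$, hence $\ell'_{\alpha,c}(z) \in (-1,0)$ and $\abs{\ell'_{\alpha,c}(z)} < 1$ on the interior. On the complementary (Perceptron) region the derivative is $0$ (for $0<\alpha<1$) or $-1$ (for $\alpha>1$), as recorded in \eqref{temp1}--\eqref{temp2}, and the boundary point $-c_{\alpha}$ is handled by the same limiting / extended-valued arithmetic already used in Theorem~\ref{chLogitron}. In every case $\abs{L'_{\alpha,c}(z)} \le 1$, which is exactly the bound needed.

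Finally I would dispose of $\alpha=0$. Here $\exp_{0,c}(v)=c+v$ and $\ln_{0,c}(u)=u-c$, so $\ell_{0,c}(z)=c-z$ on $\dom(\ell_{0,c})=\R_{\le c}$ while $L_{0,c}$ coincides with $\ell_P$ elsewhere; the result is a piecewise-linear convex function with slopes in $\{-1,0\}$, which is evidently $1$-Lipschitz (equivalently, a convex function whose subgradients all lie in $[-1,0]$ is globally $1$-Lipschitz). I do not expect any genuine obstacle in this corollary: the analytic content — continuity and the explicit derivative formulas — is already supplied by Theorem~\ref{chLogitron}. The only point requiring a little care is the \emph{uniformity} of the bound, i.e.\ confirming that the exponent $\alpha$ never inflates the quotient past $1$; this is immediate once one notes the base is a quotient bounded above by $1$, so the same estimate holds simultaneously for all admissible $\alpha$ and $c$.
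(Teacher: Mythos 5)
Your proposal is correct and follows essentially the same route as the paper: the paper's proof likewise reads the bound $\abs{L'_{\alpha,c}(z)} \le 1$ directly off the derivative formulas \eqref{temp1}--\eqref{temp2} from Theorem~\ref{chLogitron} and handles $\alpha=0$ via a subgradient bound. Your version merely spells out the details the paper leaves implicit (the fundamental-theorem-of-calculus reduction and the observation that the quotient $\exp_{\alpha,c}(-z)/(c+\exp_{\alpha,c}(-z))$ lies in $(0,1)$, so any positive power of it does too).
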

\begin{proof}
Let $\alpha \in \R_{++}$. From \eqref{temp1} and \eqref{temp2}, we have $\abs{L'_{\alpha,c}(z)} \le 1$. Moreover, when $\alpha=0$, we get $\abs{\partial L_{\alpha,c}(z)} \le 1$. Here $\partial L_{\alpha,c}(z)$ is a subgradient of $L_{\alpha,c}$.
\end{proof}

Before we go further, it is worth mentioning about the (un)hinge loss function. The extended logistic loss function with $\alpha=0$ has an unconventional hinge loss function, known as unhinged loss function~\cite{rooyen15}. In fact, the extended logistic loss under the domains in Definition \ref{elogf} and \ref{eexpf} becomes
$\ell_{0,c}(z) = [c+(-z + c)] - c = c - z$
where $-z \in \R$. The main advantage of this unhinged loss function is that it is robust to symmetric label noise. In fact, as observed in  \cite{rooyen15}, if the convex function is lower bounded, then it is not robust  to symmetric label noise. However, the Logitron with $\alpha=0$ is the hinge loss function which can be reformulated with first-order hinge loss function in \eqref{hingeloss}:
\begin{equation}\label{hingeX}
L_{0,1}(z) = \max(0,1 - z) 
\end{equation}
The Logitron with $0<\alpha<1$ can be regarded as the smoothed hinge loss when we set $c_{\alpha}=-1$. Actually, the $k$-th order hinge loss in~\eqref{hingeloss} with an additional $k$-th root stabilizer function, is a special case of the Logitron with $0<\alpha<1$ and $c_{\alpha}=-1$. Also, as observed in Figure \ref{fig:img3} and \ref{fig:img4}, the Logitron loss function with $c=1$ behaves like the logistic loss function when $\alpha \approx 1$. 
Therefore, it is natural to separate the Logitron into the two category; one is the hinge-like Logitron loss function and the other is the logistic-like Logitron loss function. In the coming Section \ref{sec4}, we analyze the Logitron model in two different points of view. 

\section{The Low complexity Logitron with $\alpha \in (\R_{+} \setminus \{ 1 \}) \cap \mathbb{Q}$\label{sec4}}
In the previous Section, we found that the Logitron has many useful properties such as smoothness and classification-calibration. However, to be more practical in terms of computation, we need to reduce the spaces of model parameter $\alpha$ and of margin parameter $c$. In this Section, we introduce the low complexity Logitron loss function~\eqref{LogiTloss} with $\alpha \in \R_+ \cap \mathbb{Q}$ based on higher-order hinge loss in~\eqref{hingeloss}. With additional restriction of $\alpha$ and $c$, we have two different categories of the Logitron; one is the hinge loss-like Logitron (Hinge-Logitron) and the other is the logistic loss-like Logitron (Logistic-Logitron). 

Let us start with the generalized $q$-th order hinge loss function. As stated in \cite{friedman91}, it corresponds to a basis function of the generalized $q$-th order spline.
\begin{equation}\label{qhinge}
\ell_{H,q}(z) = (\max(0,1-z))^q
\end{equation}
where $q \in \mathbb{Q}^* = \{ \frac{a}{b} \;|\; \frac{a}{b} \not\in [0,1) \hbox{ and } a,b \in \mathbb{Z} \}$. Interestingly, the low complexity Logitron with $\alpha \in (\R_+ \setminus \{ 1\}) \cap \mathbb{Q}$ can be easily reformulated with the generalized $q$-th order hinge loss in \eqref{qhinge}. In fact, let us modify the extended exponential function $\exp_{\alpha,c}(z)$ with $\max(0,\cdot)$:
\begin{equation}\label{Eexp}
\hbox{Exp}_{\alpha,c}(z) = (\max(0,c^{1-\alpha} + (1-\alpha)z))^{1/(1-\alpha)}
\end{equation} 
where $\alpha \in (\R_+ \setminus \{ 1 \}) \cap \mathbb{Q}$.  Then we have a connection between \eqref{qhinge} and \eqref{Eexp}:
\begin{equation}\label{exphinge}
\hbox{Exp}_{\alpha,c}(z) =  c \ell_{H,\frac{1}{1-\alpha}}\left(\frac{z}{c_{\alpha}} \right)
\end{equation}
If we set $\abs{c_{\alpha}}=1$ and $q=\frac{1}{1-\alpha}$ then $\hbox{Exp}_{\alpha,c}$ becomes the generalized $q$-th order hinge loss function with an additional margin control parameter $c$. Now, let us reformulate the Logitron with the modified extended exponential function $\hbox{Exp}_{\alpha,c}$~\eqref{Eexp}. 
\begin{theorem}\label{qLogitron}
Let $\alpha \in (\R_+ \setminus \{ 1 \}) \cap \mathbb{Q}$ and $c \in \R_{++}$.
Then the low complexity Logitron can be reformulated with $\hbox{Exp}_{\alpha,c}(\cdot)$~\eqref{Eexp}:
\begin{equation}\label{lcLogitron}
L_{\alpha,c}(z) = 
\left\{\begin{array}{l} 
{\cal L}_{\alpha,c}(z) \hskip 2cm\hbox{ if } \alpha \in [0,1)\\
\max\left(-z, {\cal L}_{\alpha,c}(z) \right) \quad \hbox{ if } \alpha \in (1,\infty)
\end{array}\right.
\end{equation}
where 
\begin{equation}\label{mLogitron}
{\cal L}_{\alpha,c}(z) = \ln_{\alpha,c}(c + \hbox{Exp}_{\alpha,c}(-z))
\end{equation}
\end{theorem}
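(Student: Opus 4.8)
The plan is to compare $\hbox{Exp}_{\alpha,c}(-z)$ with the original $\exp_{\alpha,c}(-z)$ on the two regions separated by the boundary point $z=-c_{\alpha}$, and to verify that the $\max(0,\cdot)$ truncation in \eqref{Eexp} exactly reproduces the two-piece definition of the Logitron loss \eqref{LogiTloss}. The argument splits according to the sign of $1-\alpha$, since both the linear term $c^{1-\alpha}+(1-\alpha)v$ and the outer exponent $1/(1-\alpha)$ flip sign across $\alpha=1$. First I would record the elementary fact that, with $v=-z$, the argument $c^{1-\alpha}+(1-\alpha)v$ is nonnegative exactly when $-z\in\dom(\exp_{\alpha,c})$, i.e. when $z\in\dom(\ell_{\alpha,c})$; on that region the truncation is inert, so $\hbox{Exp}_{\alpha,c}(-z)=\exp_{\alpha,c}(-z)$ and hence ${\cal L}_{\alpha,c}(z)=\ell_{\alpha,c}(z)$. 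This handles the domain in both cases and reduces the theorem to analysing the complementary region where the truncation activates.

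For $\alpha\in[0,1)$ the outer exponent $1/(1-\alpha)$ is positive, and the complementary region is $z>-c_{\alpha}>0$; there the truncation gives $\hbox{Exp}_{\alpha,c}(-z)=0^{1/(1-\alpha)}=0$, so ${\cal L}_{\alpha,c}(z)=\ln_{\alpha,c}(c)=0$. Since $\ell_P(z)=\max(0,-z)=0$ for these positive $z$, we obtain ${\cal L}_{\alpha,c}(z)=L_{\alpha,c}(z)$ everywhere, which is the first branch of \eqref{lcLogitron}.

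For $\alpha\in(1,\infty)$ the exponent $1/(1-\alpha)$ is negative and the complementary region is $z\le-c_{\alpha}<0$; there the convention $1/0=+\infty$ forces $\hbox{Exp}_{\alpha,c}(-z)=0^{1/(1-\alpha)}=+\infty$, whence ${\cal L}_{\alpha,c}(z)=\ln_{\alpha,c}(c+\infty)=c_{\alpha}$ by the extended-valued arithmetic of Section~\ref{sec1-1}. Because $z\le-c_{\alpha}$ gives $-z\ge c_{\alpha}$, the outer maximum yields $\max(-z,{\cal L}_{\alpha,c}(z))=\max(-z,c_{\alpha})=-z=\ell_P(z)=L_{\alpha,c}(z)$. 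On the domain itself I would invoke the inequalities already established in the proof of Theorem~\ref{chLogitron} (that $\ell_{\alpha,c}(z)>0\ge-z$ for $z>0$ and $-z<\ell_{\alpha,c}(z)$ for $z<0$) to conclude ${\cal L}_{\alpha,c}(z)=\ell_{\alpha,c}(z)\ge-z$, so the outer $\max$ collapses to ${\cal L}_{\alpha,c}(z)=L_{\alpha,c}(z)$; this is the second branch.

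The main obstacle is the bookkeeping of the extended-valued arithmetic at the boundary $z=-c_{\alpha}$ for $\alpha>1$, where one must chain $0^{\text{negative}}=+\infty$ with $\ln_{\alpha,c}(+\infty)=c_{\alpha}$ and then check that the outer $\max$ with $-z$ selects the Perceptron branch rather than the now-finite logistic value; the rest is a direct substitution once the truncation/domain correspondence is in place.
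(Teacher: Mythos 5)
Your proposal is correct and takes essentially the same route as the paper's own proof: split into $\alpha\in[0,1)$ and $\alpha\in(1,\infty)$, observe that the $\max(0,\cdot)$ truncation in $\hbox{Exp}_{\alpha,c}$ is inert on $\dom(\ell_{\alpha,c})$, and use the extended-valued arithmetic ($0^{1/(1-\alpha)}=+\infty$, $\ln_{\alpha,c}(c+\infty)=c_{\alpha}$) on the complementary region to recover the Perceptron branch. Your explicit appeal to the inequalities from Theorem~\ref{chLogitron} to show that $\max(-z,{\cal L}_{\alpha,c}(z))$ collapses to ${\cal L}_{\alpha,c}(z)$ on the domain when $\alpha>1$ is a small point the paper leaves implicit, and is a welcome addition.
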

\begin{proof}
When $\alpha=0$, we get the first-order hinge loss in~\eqref{hingeloss}. Now, let us assume that $\alpha \in (0,1)$, then we have $-z \in dom(\exp_{\alpha,c}) = \R_{\ge c_{\alpha}}$. In this region, it is easy to see $\exp_{\alpha,c}(-z) = \hbox{Exp}_{\alpha,c}(-z)$. Also, when $-z \not \in \dom(\exp_{\alpha,c})$, we get $\hbox{Exp}_{\alpha,c}(-z) = 0$ and, from the definition of the Logitron loss in \eqref{LogiTloss}, $\ell_P(z) = 0$. Now let use assume that $\alpha > 1$. For all $-z \in \dom(\exp_{\alpha,c}) = \R_{< c_{\alpha}}$, it is easy to check $\hbox{Exp}_{\alpha,c}(-z)  = \exp_{\alpha,c}(-z)$. On the other hand, let $-z \not\in \dom(\exp_{\alpha,c}) = \R_{< c_{\alpha}}$, then $-z \ge c_{\alpha}$ and thus we have
$\hbox{Exp}_{\alpha,c}(-z) = c\ell_{H,\frac{1}{1-\alpha}}\left(\frac{-z}{c_{\alpha}} \right) = \left(\frac{1}{c\max\left(0,1-\frac{(-z)}{c_{\alpha}}\right)}\right)^{\frac{1}{\alpha-1}}  = \left(\frac{1}{0}\right)^{\frac{1}{\alpha-1}} = \infty$
and $\ln_{\alpha,c}(c + \hbox{Exp}_{\alpha,c}(-z)) =  \ln_{\alpha,c}(c + \infty) = \frac{1}{1-\alpha} \left(\frac{1}{c + \infty}\right)^{\alpha-1} + c_{\alpha} = c_{\alpha}$. Here, with an additional $\max(-z,\cdot)$ function, we have
$
\max(-z,{\cal L}_{\alpha,c}(z)) =  \max(-z,c_{\alpha}) = -z.
$ 
Note that if $z<0$ then $\ell_P(z) = \max(0,-z) = -z$. Therefore, when $z \not\in \dom(\ell_{\alpha,c}) = \R_{> -c_{\alpha}}$ (i.e., $z \le -c_{\alpha}$), we have  
$
\max(-z,{\cal L}_{\alpha,c}(z)) = L_{\alpha,c}(z).
$
\end{proof} 
Note that, in Theorem \ref{qLogitron}, though we have restricted the range of $\alpha$ for practical reason, it can be extended to $\R_+ \setminus \{ 1 \}$. When we set $\abs{c_{\alpha}}=1$, the Logitron loss is similar to the generalized $q$-th order hinge loss function~\eqref{qhinge}. However, if we set $c=1$ then the role of the extended exp and log is the approximation of the conventional exp and log function. Especially, when $\alpha \approx 1$, the Logitron loss function almost equals to the logistic loss function. See Figure \ref{fig:img3} and \ref{fig:img4}. As a consequence, we have four different categories of the Logitron loss function based on the model parameter $\alpha$ and the margin parameter $c$.
\begin{itemize}
\item { Hinge-Logitron} ($\abs{c_{\alpha}}=1$): { H-Logitron} ($0<\alpha<1$ and $c_{\alpha}=-1$) and { H+Logitron} ($\alpha>1$ and $c_{\alpha}=1$)
\item { Logistic-Logitron} ($c=1$):  { L-Logitron} ($0<\alpha<1$ and $c=1$) and  { L+Logitron} ($\alpha>1$ and $c=1$)
\end{itemize}
Since the parameter $q$ of the generalized $q$-th order hinge-loss function~\eqref{qhinge} can be negative, the classic margin concept is also required to be generalized. We call the classic margin as {\it positive margin} if the loss function touch the Perceptron loss on the positive axis. On the other hand, if the loss function touch the Perceptron loss on the negative axis, then we call that touch point as the {\it negative margin}. Actually, the positive margin ($\alpha<1$) and negative margin ($\alpha>1$) equals to the value of $\abs{c_{\alpha}}$ (i.e., $bd(\dom\ell_{\alpha,c})$). Therefore, since the logistic regression does not touch the Perceptron loss, it does not have margin. In Hinge-Logitron, the H-Logitron loss function has positive margin like the higher-order hinge loss~\eqref{hingeloss}. Figure \ref{fig:img3} (c) compare the H-Logitron with the first-order hinge-loss (SVM) and the second order hinge-loss (L2SVM). However, the H+Logitron loss function has negative margin through the Perceptron line (i.e., $\ell_P(z)$). See Figure \ref{fig:img4} (c) for the shape of the H+Logitron with various different choice of $\alpha = i/5$ and $i=6,7,8,9$. As regards the Logistic-Logitron model, we have the L-Logitron loss function approximating the logistic loss with positive margin and the L+Logitron loss function approximating the logistic loss with negative margin. See Figure \ref{fig:img3} (d) and Figure \ref{fig:img4} (d), respectively.

It is useful seeing a  direct connection between higher-order hinge loss in~\eqref{hingeloss} and the Logitron loss function $L_{\alpha,c}(z)$ with $\frac{1}{1-\alpha} = k \in  \mathbb{Z} \setminus \{ 0 \}$. Here $\alpha = 1-k^{-1} \in [0,2]$. Then \eqref{exphinge} is simplified as $\hbox{Exp}_{1-k^{-1},c}(z) = c\ell_{H,k}\left(\frac{z}{c_{k}}\right)$ 
with $\alpha = 1- k^{-1}$ and $c_k = -kc^{1/k}<0$. Now, when $k \ge 1$ (i.e., $\alpha \in [0,1)$), we have H-Logitron ($c_k=-1$)
\begin{equation}\label{hlogitronF}
f^* = \argmin_{f \in {\cal F}} L_{\alpha,c}(yf(x)) = \argmin_{f \in {\cal F}} \sqrt[k]{1 + \ell_{H,k}(yf(x))}
\end{equation}
It actually means that the H-Logitron with $\alpha=1-k^{-1}$ and $k \in \mathbb{Z} \setminus \{ 0 \}$ is a higher-order SVM with an additional $k$-th root stabilizer function. As observed in Figure \ref{fig:img3} (c), the second-order hinge-loss (L2SVM) highly penalize the misclassified data. On the other hand, the penalty on the misclassified data of the H-Logitron is  stabilized, irrespective of the choice of $k>0$. 

When $k\le -1$ (i.e., $\alpha>1$), we get a totally new classification model, H+Logitron.
\begin{equation} 
\min_{f \in {\cal F}} L_{\alpha,c}(yf(x)) 
\end{equation}
where $c_{k} = -k c^{1/k}>0$ and
$$
L_{\alpha,c}(z)  = \max\left(-z, c_{k}-c_k\left\{ 1 + \ell_{H,k}\left(-\frac{z}{c_{k}}\right) \right\}^{1/k} \right)
$$
  In this instance,  we do not have positive margin. That is, $L_{\alpha,c}(z) \cap \ell_P(z) = -z$ for all $z \le -c_k$ and $L_{\alpha,c}(z)>0$ for all $z > -c_k.$ By controlling $c_{k}$ (i.e., the negative margin), we obtain the closeness of the H+Logitron $L_{\alpha,c}$ to the Perceptron loss function $\ell_P$. Though the H+Logitron does not have the classic margin, i.e., the positive margin, however, due to its simple structure of the model, we need to investigate the H+Logitron model in more details.  For instance, let $k=-1 (\alpha=2)$ then we have
\begin{equation}\label{lowcomplexityLoss}
L_{2,c}(z) = \left\{\begin{array}{l} 
\frac{c_{-1}}{2 + z/c_{-1}} \quad \hbox{ if } z > -c_{-1}\\
-z \hskip 1.1cm \hbox{ otherwise }
\end{array}\right.
\end{equation}
where $c_{-1}= c^{-1}$. Interestingly, we can remove singularity which existed on the boundary of the domain of the extended exponential function. Moreover, as noticed in Theorem \ref{chLogitron}, H+Logitron with $k=-1$ has a continuous derivative, $
\frac{dL_{2,c}(z)}{dz} = \max\left(-1, -(2+cz)^{-2}\right)$. 
The most important feature of \eqref{lowcomplexityLoss} is that we only need division and multiplication for the evaluation of the gradient and the loss function itself. This is the main advantage of \eqref{lowcomplexityLoss}. As observed in Section \ref{sec5}, the performance of it is comparable to logistic regression and SVM. Note that, when $k=-2 (\alpha=3/2)$, the H+Logitron can be reformulated as
\begin{equation}\label{lowcomplexityLoss2}
L_{3/2,c}(z) = \max\left(-z, c_{-2}\left\{ 1 - \frac{\abs{1 + z/c_{-2}}}{\sqrt{1 + (1+ z/c_{-2})^2}} \right\} \right)
\end{equation}
This model is rather complicated. However, it is also smooth on the entire domain $\R$ and classification-calibrated. In fact, when $z \gg c_{-2}$, $L_{3/2,c}(z) \approx 0$ and, when $z\le-c_{-2}$, $L_{3/2,c}(z) = -z$. It behaves like the conventional margin-based loss function. 

\section{Experiments with various $\ell_2$-regularized Logitron models\label{sec5}}
This Section compare performance of the proposed Logitron with logistic regression and SVM within the linear classification framework. 

Let us define the Logitron minimization problem with the linear function space in \eqref{funcSpace}. For simplicity, we use $\ell_2$-regularizer, but it could be replaced with a sophisticated regularization model.
\begin{equation}\label{linmin}
\min_{w,b}\; H(w,b)+ \lambda Reg(w)
\end{equation}
where $Reg(w) = \norm{w}_2^2 = \inprod{w}{w}$ is the $\ell_2$-regularizer,  
$
H(w,b) = \sum_{i=1}^N L_{\alpha,c}(y_i[\inprod{w}{x_i} + b]), 
$
and
$(x_i , y_i) \in \R^n \times \{-1,+1 \}$. 
 Although the loss function $H$ is rather complicated, it has many useful properties for gradient-based optimization. Indeed, the loss function $H$ is convex and differentiable on $\R^{n+1}$, irrespective of the choice of $\alpha \in \R_{++}$. For simplicity, we use the L-BFGS algorithm in {\it minFunc}~\cite{mark19}. It is implemented in the MATLAB framework. Note that we use the famous LIBLINEAR package~\cite{fan08} for the benchmark of the proposed Logitron model. Among various linear classification models in  LIBLINEAR, we select typical models; logistic regression~\eqref{clogistic} and higher-order SVM~\eqref{hingeloss} (the first-order SVM and the second-order SVM (i.e., L2SVM)). For logistic regression, we use the primal formulation ($s=0$). For SVM, we use the dual formulation ($s=3$). For L2SVM, we use the primal formulation ($s=2$). We also use the bias term in LINLINEAR ($B=1$). Note that all models have $\ell_2$-regularization term. 
As regards the regularization parameter $\lambda$, we simply use the following parameter selection strategy for $\lambda$ as recommended in the LIBSVM~\cite{chang11}.
\begin{equation}\label{lambdaD}
\lambda = 2^{d}, \; d = -14,-13,-12,...,5
\end{equation} 
In the models of LIBLINEAR, the regularization parameter is located on the loss function and thus we use $\lambda^{-1}$ of \eqref{lambdaD} for the regularization parameter of them. 

\begin{table*}
\centerline{
\begin{tabular}{c|c||l|l||l|l||l}
\hline
\textbf{Model} &\textbf{Submodel}&${\bm \alpha}$& ${\bm q} = \frac{1}{1-\alpha}$&${\bm c_{\bm \alpha}}$&${\bm c}$&${\bm \lambda}$\\\hline\hline
\textbf{H-Logitron} &\textbf{H-1} &$1/5,\; 2/5,\; 3/5,\; 4/5$& $5/4,5/3,5/2,5$ &$-1$& - & \eqref{lambdaD} \\\cline{2-7}
&\textbf{H-2} &$1/2$&$2$&$-1,-4/4,-3/5,-2/5$& - & \eqref{lambdaD} \\\cline{2-7} 
&\textbf{H-3} &$2/3$&$3$&$-1,-4/4,-3/5,-2/5$& - & \eqref{lambdaD} \\\cline{2-7} 
&\textbf{H-4}&$3/4$&$4$&$-1,-4/4,-3/5,-2/5$& - & \eqref{lambdaD} \\\hline
\textbf{H+Logitron}  & {\bf H+1} &$6/5,\;7/5,\;8/5,\;9/5$&$-5,-5/2,-5/3,-5/4$&$1$& - & \eqref{lambdaD} \\\cline{2-7}
& {\bf H+2} &$2$&$-1$&$1,4/5,3/5,2/5$& - & \eqref{lambdaD} \\\cline{2-7}
&\textbf{H+3} &$3/2$&$-2$&$1,4/5,3/5,2/5$& - & \eqref{lambdaD}  \\\hline 
{\bf L-Logitron} &\textbf{L-} &$4/5,\;5/6,\;7/8,\;11/12$&$5,6,8,12$&-& 1 & \eqref{lambdaD} \\\hline
\textbf{L+Logitron}&\textbf{L+}&$4/3,\;5/4,\;8/7,\;13/12$&$-3,-4,-7,-12$& - & $1$ &\eqref{lambdaD}  \\\hline 
\end{tabular}
}
\caption{The parameter spaces of nine sub-Logitron models; four H-Logitron (H-1,H-2,H-3, and H-4), three H+Logitron (H+1, H+2, and H+3), L-Logitron (L-), and L+Logitron (L+).  
}\label{tablePar}
\end{table*}

In terms of parameter space of Logitron, we need to select not only the regularization parameter $\lambda$ but also the model parameter $\alpha$ and the margin parameter $c$. From the analysis in the earlier Section \ref{sec4}, we know that the Logitron has four different submodels (H-Logitron, H+Logitron, L-Logitron, and L+Logitron). The H-Logitron is the higher-order SVM with an additional stabilization function~\eqref{hlogitronF}. For simplicity, we only consider $2$th - $5$th order SVM with the corresponding $k$-th root function. 
In the category of H+Logitron, we have two sub-models; H+Logitron with $\alpha=2 (k=-1)$~\eqref{lowcomplexityLoss} and H+Logitron with $\alpha=3/2 (k=-2)$~\eqref{lowcomplexityLoss2}. Actually, the minimization problem of the H+Logitron with $\alpha=2$ can be solved by using elementary arithmetics such as division and multiplication. In total, we have nine sub-models; {\bf H-1}($\alpha=i/5$ with $i=1,2,3,4$ i.e., $k=2,3,4,5$), {\bf H-2} ($\alpha=1/2$, i.e., $k=2$), {\bf H-3} ($\alpha=2/3$, i.e., $k=3$), {\bf H-4} ($\alpha=3/4$, i.e., $k=4$), {\bf H+1} ($\alpha=j/5$ with $j=6,7,8,9$), {\bf H+2} ($\alpha=2$), {\bf H+3} ($\alpha=3/2$), {\bf L-} ($\alpha=4/5,5/6,7/8,11/12$), and {\bf L+} ($\alpha=4/3,5/4,8/7,13/12$).  Based on the analysis in Section \ref{sec4}, except {\bf H-1} and {\bf H+1}, the model parameter $\alpha$ for all sub-Logitron model is in the category $\{ \alpha \in \R_{++} \;|\;  \frac{1}{1-\alpha}  = k \in \mathbb{Z} \setminus \{ 0 \} \}$. We summarize the parameter space of each sub-Logitron model in Table \ref{tablePar}. Four-fold cross validation~\cite{delgado14} is used to select the optimal parameters of nine sub-Logitron models and three models of LIBLINEAR. Due to the independency of each cross-validation process, it is easy to be implemented in parallel processing machines.

In terms of benchmark dataset, we use the well-organized datasets in~\cite{delgado14} while reporting the performance of the nine sub-Logitron models. In fact, they are pre-processed and normalized in each feature dimension with mean zero and variance one. The raw data are mostly in UCI machine learning repository. Note that, as commented in~\cite{wainberg16}, we reorganize the dataset in \cite{delgado14}. First, each dataset is separated into the training and testing data set which are not overlapped. Each training data set is randomly shuffled for $4$-fold cross validation. Among the dataset in~\cite{delgado14}, we use $118$ datasets after removing ambiguous dataset in terms of data splitting strategy. In Appendix, we list up all information of datasets such as number of instances, number of train data, number of test data, feature dimension, and number of classes. See Table~\ref{tableFullData} for more details. Last but not least, for multi-class datasets, we exploit the one-vs-all strategy, the most commonly used in multi-class classification based on a binary classifier. This strategy is also used in LIBLINEAR~\cite{fan08}. 

\begin{table*}
\centerline{
\begin{footnotesize}
\begin{tabular}{c|c||c|c|c|c||c|c|c||c|c|||c|c|c}\hline
{\bf Category}&{\bf Method}&\textbf{H-1}&\textbf{H-2}&\textbf{H-3}&\textbf{H-4}&\textbf{H+1}&\textbf{H+2}&\textbf{H+3}&\textbf{L-}&\textbf{L+}&\textbf{Logisitc}&\textbf{SVM}&\textbf{L2SVM}\\\hline\hline
\textbf{2-class}& acc(\%) &83.49&83.37&83.57&{\bf 83.60}&83.20&82.95&83.34&83.46&83.43&83.48&82.60&82.82\\\hline
                        & ranking      &6.13&{\bf 6.00}&6.05&6.19&7.00&7.35&6.41&6.09&6.05&6.21&7.49&7.02\\\hline
\textbf{M-class}& acc(\%) & 73.26&73.39&73.55&{\bf 73.59}&73.18&72.73&73.19&73.57&73.31&72.50&71.53&72.34\\\hline
                         & ranking      &5.87&6.32&5.70&5.71&6.77&7.70&6.75&{\bf 5.13}&6.25&6.61& 7.93&7.27\\\hline
\textbf{All-class}& acc(\%) & 77.33&77.36&77.54&{\bf 77.58}&77.17&76.80&77.23&77.51&77.34&76.87&75.94&76.51\\\hline
                          &  ranking   &5.97&6.19&5.84&5.90&6.86&7.56&6.61&{\bf 5.51}&6.17&6.45& 7.75&7.17\\\hline
\end{tabular}
\end{footnotesize}
}
\caption{Comparison of the performance of the various Logitron sub-models with the well-known linear classifiers in LIBLINEAR~\cite{fan08}. Overall, {\bf H-4} (i.e. the fourth-order SVM with an additional fourth root stabilization function) shows the best performance in terms of the classification accuracy for all-class problems. What is interesting on the Logitron model is that the cheapest classification model {\bf H+2} shows reasonable performance. Actually, it is comparable to the logistic regression in LIBLINEAR in terms of the classification accuracy. Interestingly, {\bf H+3} (the higher-order version of {\bf H+2}) shows better performance than {\bf H+2}. With respect to the Friedman ranking, L-Logitron (i.e., {\bf L-}) shows the best performance.
}\label{tableResult}
\end{table*}

The whole experiments are run five times and the averaged test score of each dataset is reported in Table \ref{tableTwo} and Table \ref{tableMc} in Appendix. In each experiment, the best parameters are chosen through the $4$-fold cross-validation. With the chosen best parameters, we minimize \eqref{linmin} with the whole training data in Table \ref{tableFullData} to find the hyperplane, i.e., $(w,b)$. Then we evaluate the performance of each classification model with test dataset in Table \ref{tableFullData}. For more details on CV-based minimization, see \cite{chang11}. All numerical results are summarized in Table \ref{tableResult}.  In terms of classification accuracy, H-Logitron {\bf H-4} is the best classification model and L-Logitron {\bf L-} obtains the best Friedman ranking~\cite{delgado14}. The H-Logitron submodels ({\bf H-2}, {\bf H-3}, and {\bf H-4}) are $k$-th order SVMs ($k=2,3,4$) with the corresponding $k$-th root stabilization functions. In this category, as we increase the order of the model, the performance is getting better. What is interesting is that {\bf H-2} (the second order SVM with root stabilization function) outperforms the classic second order SVM, i.e., L2SVM~\cite{fan08}. A H+Logitron subbmodel, i.e., the cheapest classification model {\bf H+2} with \eqref{lowcomplexityLoss}, also shows comparable performance to the classic logistic regression. 
\begin{figure}[t]
\centering
\includegraphics[width=3.5in]{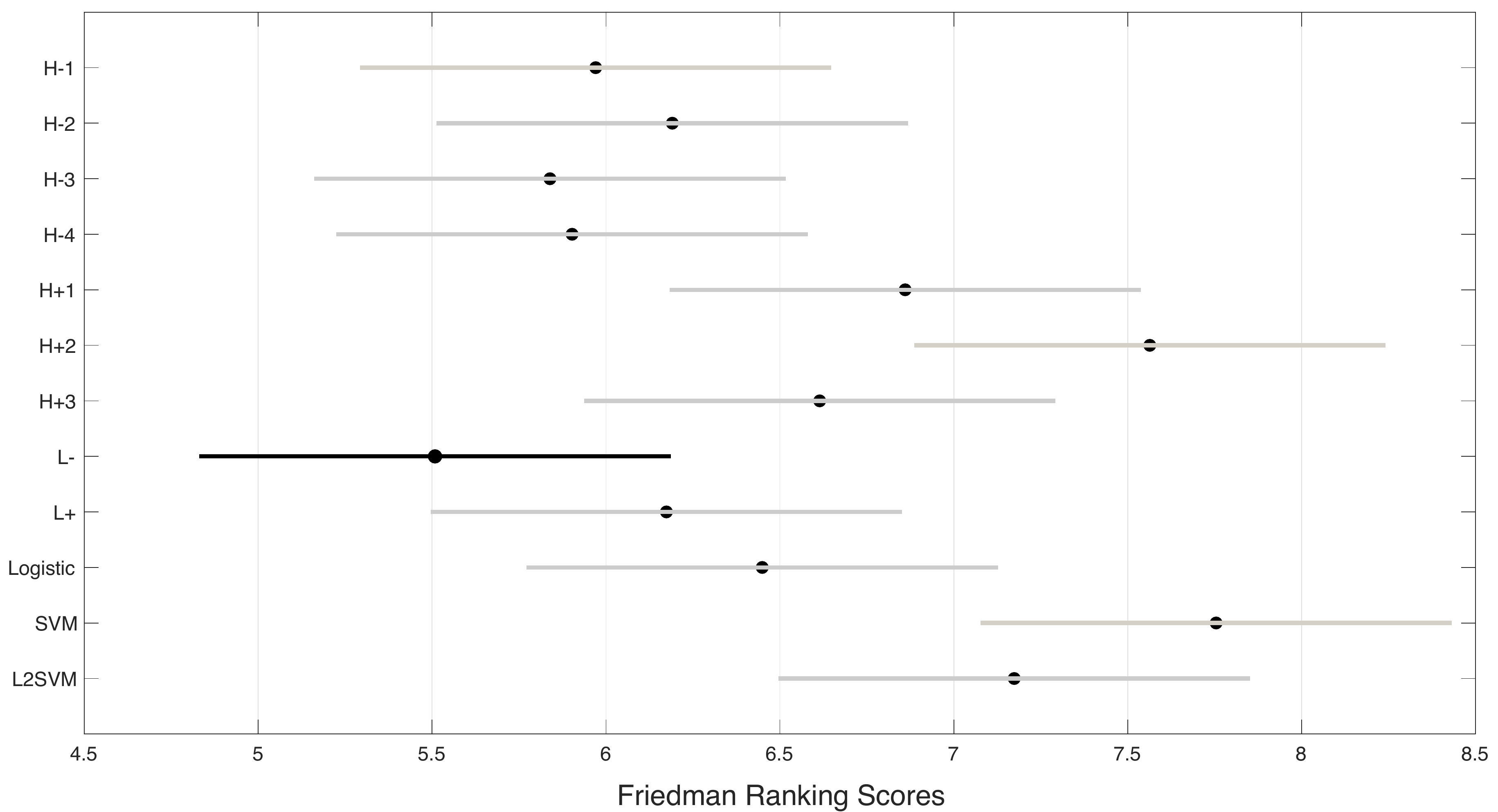}
\caption{Friedman Ranking Scores for all-class dataset. The L-Logitron submodel {\bf L-} shows the best performance.  Here we use $90\%$ confidence level.}
\label{fig:friedmanall}
\end{figure}
\begin{table*}
\centerline{
\begin{scriptsize}\begin{tabular}{c|c|c|c|c|c|c|c|c}
\hline
&\textbf{H-1}&\textbf{racc}&\textbf{H-2}&\textbf{racc}&\textbf{H-3}&\textbf{racc}&\textbf{H-4}&\textbf{racc}\\\hline\hline
1 &hill-valley  &10.78&hill-valley  &9.00&hill-valley  &12.10&hill-valley  &13.42\\\hline
2&acute-inflammation  &0.00&heart-hungarian  &0.20&heart-hungarian  &0.07&acute-inflammation  &0.00\\\hline
3&acute-nephritis  &0.00&acute-inflammation  &0.00&credit-approval  &0.01&acute-nephritis  &0.00\\\hline
4&waveform  &-0.25&acute-nephritis  &0.00&acute-inflammation  &0.00&heart-hungarian  &-0.07\\\hline
5&ozone  &-0.32&heart-cleveland  &-0.03&acute-nephritis  &0.00&dermatology  &-0.13\\\hline
6&twonorm  &-0.38&waveform  &-0.26&waveform  &-0.16&breast-cancer-wisc-diag  &-0.24\\\hline
7&credit-approval  &-0.46&ozone  &-0.37&dermatology  &-0.24&credit-approval  &-0.28\\\hline
8&heart-hungarian  &-0.48&breast-cancer-wisc-diag  &-0.45&ozone  &-0.29&waveform  &-0.28\\\hline
9&spectf  &-0.52&twonorm  &-0.46&breast-cancer-wisc-diag  &-0.38&ozone  &-0.32\\\hline
10&breast-cancer-wisc-diag  &-0.66&spectf  &-0.52&twonorm  &-0.51&spectf  &-0.52\\\hline
11&breast-cancer-wisc  &-0.78&breast-cancer-wisc  &-0.84&spectf  &-0.52&twonorm  &-0.52\\\hline
12&statlog-heart  &-0.79&dermatology  &-1.00&breast-cancer-wisc  &-0.61&breast-cancer-wisc  &-0.61\\\hline
13&dermatology  &-0.90&-&-&fertility  &-0.80&-&-\\\hline\hline
&\textbf{H+1}&\textbf{racc}&\textbf{H+2}&\textbf{racc}&\textbf{H+3}&\textbf{racc}&\textbf{L-}&\textbf{racc}\\\hline\hline
1&hill-valley  &11.05&hill-valley  &10.78&hill-valley  &10.91&hill-valley  &3.59\\\hline
2&heart-hungarian  &0.20&heart-hungarian  &0.61&credit-approval  &0.30&heart-hungarian  &0.07\\\hline
3&acute-inflammation  &0.00&acute-inflammation  &0.00&heart-hungarian  &0.07&acute-inflammation  &0.00\\\hline
4&acute-nephritis  &0.00&acute-nephritis  &0.00&acute-inflammation  &0.00&acute-nephritis  &0.00\\\hline
5&waveform  &-0.30&breast-cancer-wisc-diag  &-0.03&acute-nephritis  &0.00&breast-cancer-wisc-diag  &-0.10\\\hline
6&breast-cancer-wisc-diag  &-0.38&waveform  &-0.04&breast-cancer-wisc-diag  &-0.17&waveform  &-0.11\\\hline
7&ozone  &-0.38&ozone  &-0.27&waveform  &-0.28&credit-approval  &-0.28\\\hline
8&credit-approval  &-0.40&spectf  &-0.52&statlog-heart  &-0.35&ozone  &-0.30\\\hline
9&spectf  &-0.52&twonorm  &-0.62&ozone  &-0.48&twonorm  &-0.33\\\hline
10&twonorm  &-0.63&statlog-heart  &-0.64&twonorm  &-0.50&dermatology  &-0.35\\\hline
11&statlog-heart  &-0.79&dermatology  &-0.90&spectf  &-0.52&spectf  &-0.52\\\hline
12&vertebral-column-3clases  &-0.95&-&-&dermatology  &-0.68&statlog-heart  &-0.64\\\hline
13&dermatology  &-1.00&-&-&-&-&breast-cancer-wisc  &-0.90\\\hline\hline
&\textbf{L+}&\textbf{racc}&\textbf{Logistic}&\textbf{racc}&\textbf{SVM}&\textbf{racc}&\textbf{L2SVM}&\textbf{racc}\\\hline\hline
1&hill-valley  &8.93&hill-valley  &6.23&acute-inflammation  &0.00&acute-inflammation  &0.00\\\hline
2&heart-hungarian  &0.34&breast-cancer-wisc-diag  &0.32&acute-nephritis  &0.00&acute-nephritis  &0.00\\\hline
3&acute-inflammation  &0.00&heart-hungarian  &0.20&ozone  &-0.30&heart-hungarian  &-0.07\\\hline
4&acute-nephritis  &0.00&acute-inflammation  &0.00&breast-cancer-wisc-diag  &-0.31&ozone  &-0.27\\\hline
5&breast-cancer-wisc-diag  &-0.03&acute-nephritis  &0.00&echocardiogram  &-0.32&credit-approval  &-0.28\\\hline
6&statlog-heart  &-0.20&credit-approval  &-0.23&waveform  &-0.33&breast-cancer-wisc-diag  &-0.31\\\hline
7&waveform  &-0.29&ozone  &-0.24&twonorm  &-0.52&waveform  &-0.33\\\hline
8&twonorm  &-0.37&twonorm  &-0.33&credit-approval  &-0.86&statlog-heart  &-0.35\\\hline
9&ozone  &-0.38&waveform  &-0.48&mammographic  &-0.88&twonorm  &-0.49\\\hline
10&credit-approval  &-0.40&statlog-heart  &-0.50&-&-&breast-cancer-wisc  &-0.61\\\hline
11&spectf  &-0.52&dermatology  &-0.90&-&-&dermatology  &-0.79\\\hline
12&dermatology  &-1.00&breast-cancer-wisc  &-0.95&-&-&-&-\\\hline
\end{tabular}
\end{scriptsize}
}
\caption{Comparisons of the Best-$1\%$ sets of the nine Logitron sub-models (H-1, H-2, H-3, H-4, H+1, H+2, H+3, L-, and L+) and three models (Logistic regression, SVM, and L2SVM) in LIBLINEAR. Here racc means the relative classification accuracy against the virtual DWN classifier~\cite{delgado14}. Note that the virtual DWN classifier means the best classifier among $179$ classifiers, including boosting, neural network, and random forest, for each individual dataset with respect to the classification accuracy.
}\label{tableTop1}
\end{table*}

Table \ref{tableTop1} presents the dataset in the Best-$1\%$ set of each classifier in terms of the relative classification accuracy (racc$>-1$). Here, the relative classification accuracy (racc) is the subtraction of the accuracy of the virtual DWN in~\cite{delgado14} from the accuracy of each classifier. Note that the virtual DWN classifier means the best classifier among $179$ classifiers, including boosting, neural network, and random forest, for each individual dataset with respect to the classification accuracy. That is, it is not a specific classifier existed in the real world but an idealistic virtual classifier. Although the function space of the Logitron is linear, interestingly, the proposed Logitron model gets better performance than the optimal DWN classifier in some datasets such as 'hill-valley', 'acute-inflammation', 'acute-nephritis', 'heart-hungarian', 'credit-approval', etc. 
    
In Figure \ref{fig:reg}, \ref{fig:palpha}, and \ref{fig:pcx}, we summarize statistical information of the parameters $\lambda$, $\alpha$, and $c$ (or $c_{\alpha}$) which are selected via $4$-fold cross-validation with the training dataset in Table \ref{tableFullData}. Since we did the whole experiments five times, the histograms are generated with $590$ samples. They are normalized for probabilistic interpretation of the parameter data. For each model, we plot histograms of two datasets; Best-$1\%$(Left) and remainders (Right). Figure \ref{fig:reg} shows the normalized histogram of the $\lambda$ with respect to $\log_2(\lambda)$. The regularization parameter $\lambda$ of all Logitron sub-models for the Best-$1\%$ set are mainly located near $2^{-14}$ or $>2^0$. Note that Logitron is not convex with respect to $\lambda$, $w$, and $b$ at the same time. Thus, there are many local minima during the selection process of the regularization parameter with cross-validation. Due to the inherent ambiguity, we have many candidate for the best regularization parameter. Therefore, when the training accuracies are even, we simply select a regularization parameter having smaller value. As a result of the regularization parameter selection process, we have relatively high frequency at $2^{-15}$. 
Figure \ref{fig:palpha} visualizes $\alpha$ for various different Logitron sub-models. The Logitron with $\alpha<1$ (i.e., {\bf H-1} and {\bf L-}) in Best-$1\%$ prefers smaller value of $\alpha$ than the remainder set. Figure \ref{fig:pcx} demonstrates the preference of the margin parameter $c_{\alpha}$ in the Logitron submodels; {\bf H-2}, {\bf H-3}, {\bf H-4} and {\bf H+2}, {\bf H+3}. Overall, {\bf H-3},{\bf H-4}, and {\bf H+3} in Best-$1\%$ prefer $\abs{c_{\alpha}}=1$ to the remainder set. 

\begin{figure*}[t]
\centering
\includegraphics[width=5.5in]{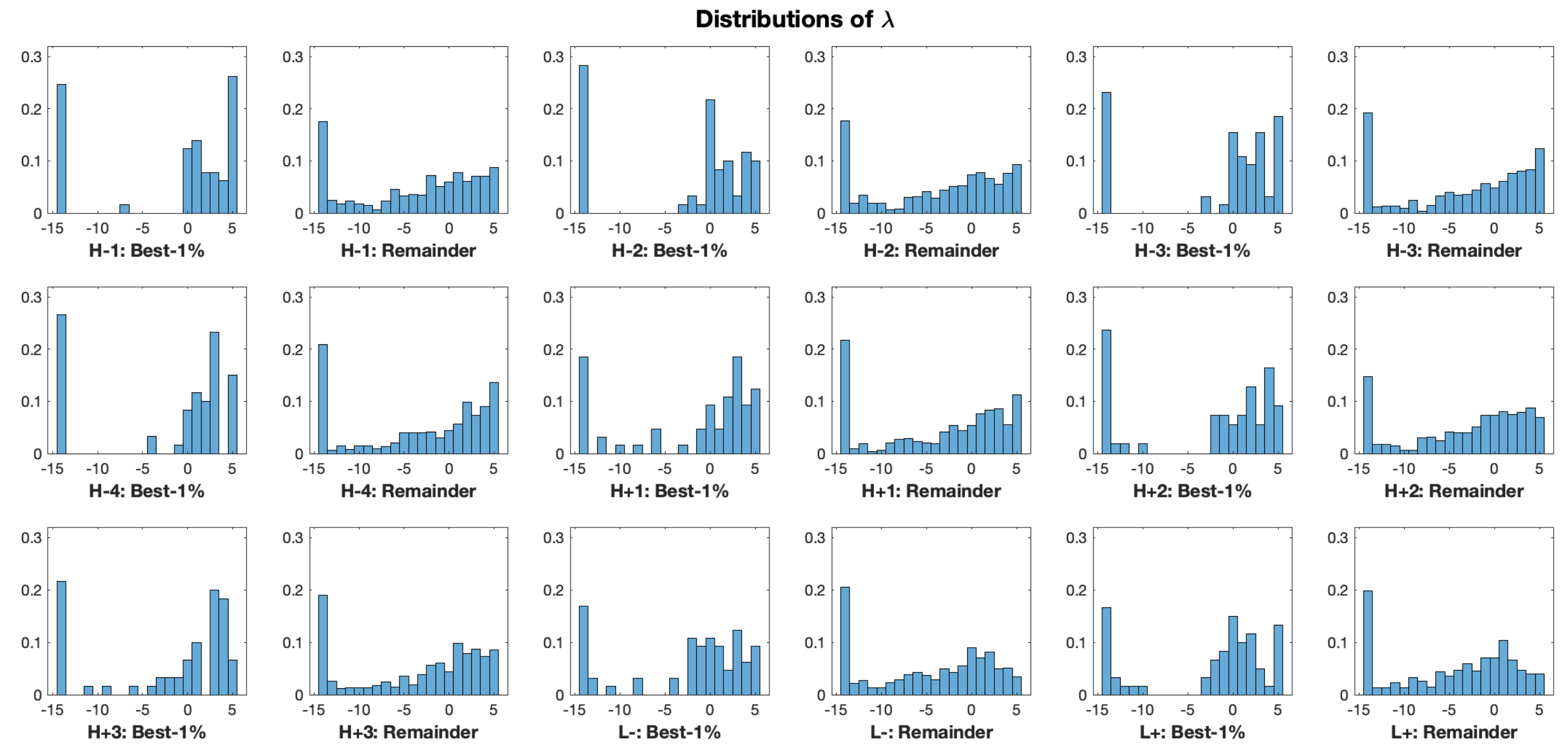}
\caption{A comparison of the regularization parameter $\lambda$ of all models. Note that the x-axis is $\log_2(\lambda)$ and the y-axis is the normalized frequencies of $\lambda$. Overall, the Logitron in the category of the Best-$1\%$ choose the regularization parameter near $2^{-14}$ or $>0$. } 
\label{fig:reg}
\end{figure*}

\begin{figure*}[t]
\centering
\includegraphics[width=5in]{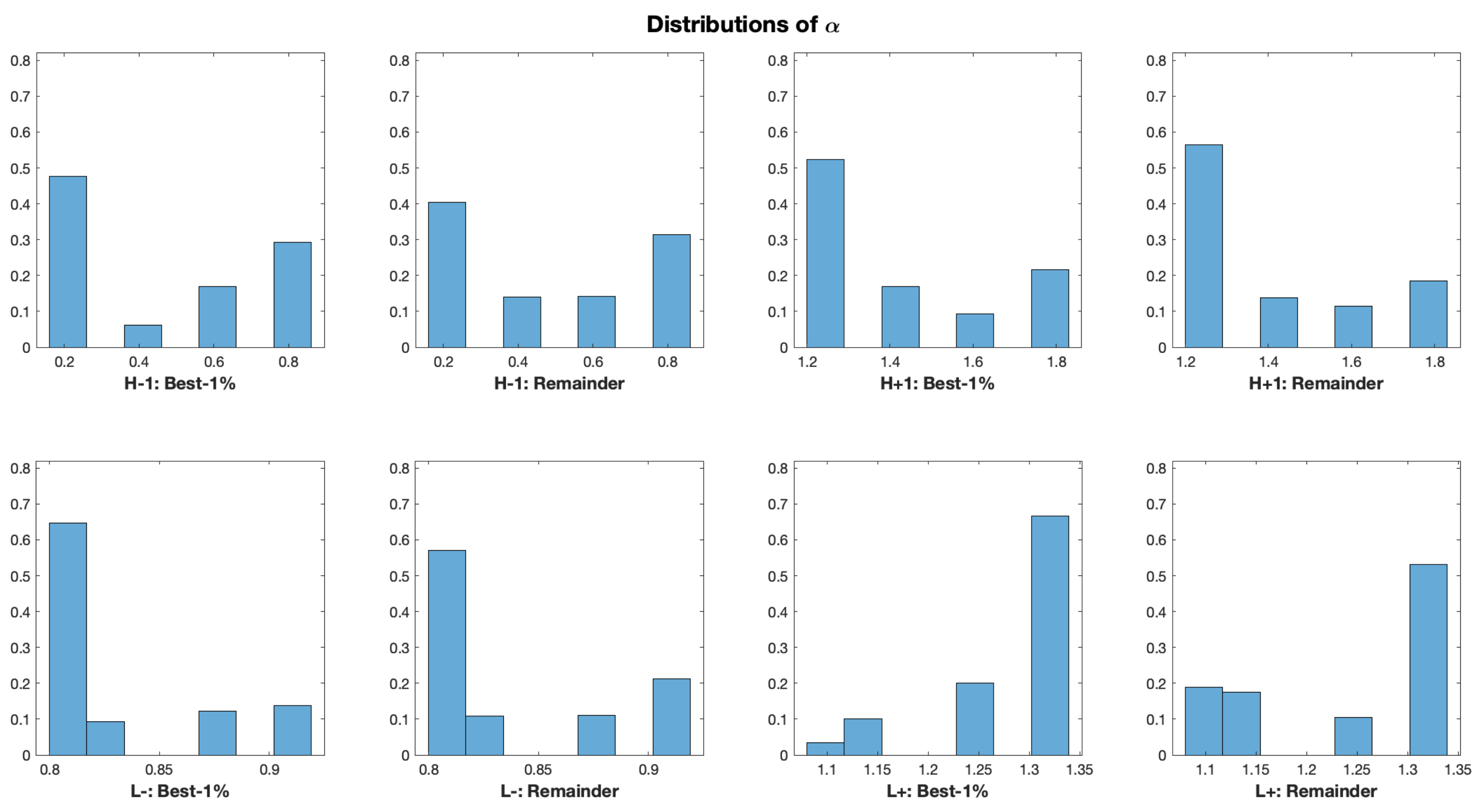}
\caption{A comparison of the model parameter $\alpha$ of the Logitron. The Logitron model with $\alpha<1$ (i.e. {\bf H-1} and {\bf L-}) in the category of Best-1\% prefer smaller value of $\alpha$.}
\label{fig:palpha}
\end{figure*}

\begin{figure*}[t]
\centering
\includegraphics[width=5in]{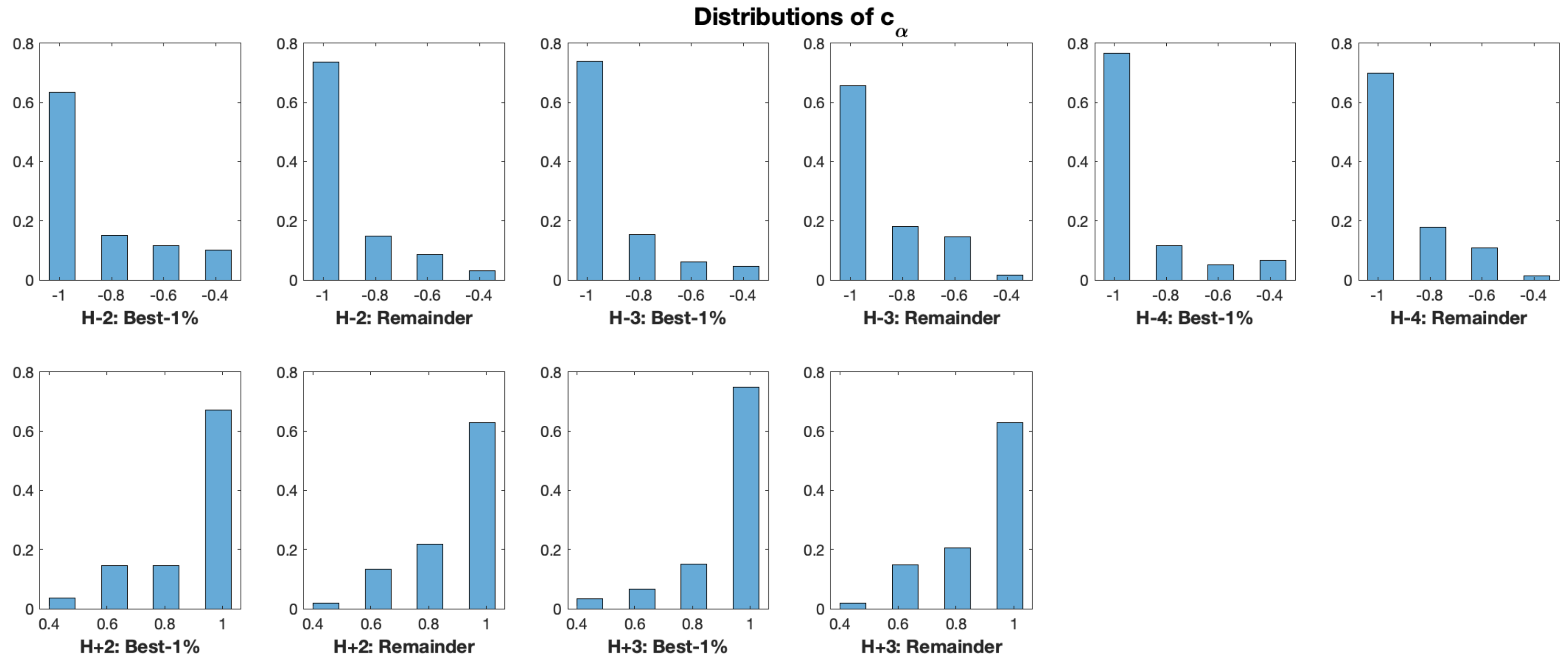}
\caption{A comparison of the margin parameter $c_{\alpha}$ of the Logitron model. In the Best-$1\%$ category, {\bf H-3}, {\bf H-4} and {\bf H+3} prefer $\abs{c_{\alpha}}=1$.} 
\label{fig:pcx}
\end{figure*}

\section{Conclusion\label{sec6}}
In this article, we have introduced a general convex classification framework, i.e., Logitron, which is an extended logistic loss function with the classic Perceptron loss function. The proposed Logitron has several useful features. A typical one is that it is differentiable on the whole real line for all $\alpha \in \R_{++}$. Therefore, it is easy to use the conventional optimization algorithm. Depending on the choice of the parameters, we have two different categories of models; the Hinge-Logitron model ($\abs{c_{\alpha}}=1$) and the Logistic-Logitron model ($\abs{c}=1$). A Hinge-Logitron model {\bf H-4} (the fourth-order SVM with an additional fourth root function) outperforms the various other sub-Logitron models and the models in LIBLINEAR~\cite{fan08} in terms of classification accuracy. Additionally, a simple classification model {\bf H+2} shows reasonable performance compared to the classic logistic regression. A  Logistic-Logitron model {\bf L-} shows the best performance in terms of Friedman ranking.  

\section*{Acknowledgments}
This article is supported by the Basic Science Program through the NRF of Korea funded by the Ministry of Education (NRF-2015R101A1A01061261). The Logitron is designed based on {\it the machine learning MATLAB package} which is available in \url{https://www.cs.ubc.ca/~schmidtm/Software/minFunc.html}.

\appendix
Table \ref{tableFullData} shows the various information of all datasets used in this work. Note that all datasets in Table \ref{tableFullData} are a corrected version of the datasets in \cite{delgado14} based on \cite{wainberg16}.  
Table \ref{tableTwo} and Table \ref{tableMc} report the numerical performance of various Logitron submodels and models in LIBLINEAR~\cite{fan08}. Here the DWN means the best result among $178$ classifiers in \cite{delgado14} for each dataset. A H-Logitron submodel {\bf H-4}, the fourth-order SVM with fourth root stabilization function, outperforms other classification models, including logistic regression, SVM, and L2SVM in LIBLINEAR~\cite{fan08}. 

\begin{table*}
\centerline{
\begin{scriptsize}\begin{tabular}{l||c|c|c|c|c||l||c|c|c|c|c}
\hline
\textbf{Dataset} &\textbf{\#Instance}&\textbf{\#Train}&\textbf{\#Test}&\textbf{Dim}&\textbf{\#Class}  & \textbf{Dataset} &\textbf{\#Instance}&\textbf{\#Train}&\textbf{\#Test}&\textbf{Dim}&\textbf{\#Class}  \\\hline\hline
\textbf{abalone  }&4177&2089&2088&8&3 & \textbf{acute-inflammation  }&120&60&60&6&2\\\hline
\textbf{acute-nephritis  }&120&60&60&6&2 & \textbf{adult  }&48842&32561&16281&14&2\\\hline
\textbf{annealing  }&798&399&399&31&5   & \textbf{arrhythmia  }&452&226&226&262&13\\\hline
\textbf{audiology-std  }&196&171&25&59&18 & \textbf{balance-scale  }&625&313&312&4&3\\\hline
\textbf{balloons  }&16&8&8&4&2 & \textbf{bank  }&4521&2261&2260&16&2\\\hline
\textbf{blood  }&748&374&374&4&2 &  \textbf{breast-cancer  }&286&143&143&9&2\\\hline
\textbf{breast-cancer-wisc  }&699&350&349&9&2 & \textbf{breast-cancer-wisc-diag  }&569&285&284&30&2\\\hline
\textbf{breast-cancer-wisc-prog  }&198&99&99&33&2 & \textbf{breast-tissue  }&106&53&53&9&6\\\hline
\textbf{car  }&1728&864&864&6&4 & \textbf{cardiotocography-10clases  }&2126&1063&1063&21&10\\\hline
\textbf{cardiotocography-3clases  }&2126&1063&1063&21&3 & \textbf{chess-krvk  }&28056&14028&14028&6&18\\\hline
\textbf{chess-krvkp  }&3196&1598&1598&36&2 & \textbf{congressional-voting  }&435&218&217&16&2\\\hline
\textbf{conn-bench-sonar-mines-rocks  }&208&104&104&60&2  & \textbf{conn-bench-vowel-deterding  }&528&264&264&11&11\\\hline
\textbf{connect-4  }&67557&33779&33778&42&2 & \textbf{contrac  }&1473&737&736&9&3\\\hline
\textbf{credit-approval  }&690&345&345&15&2 & \textbf{cylinder-bands  }&512&256&256&35&2\\\hline
\textbf{dermatology  }&366&183&183&34&6 & \textbf{echocardiogram  }&131&66&65&10&2\\\hline
\textbf{ecoli  }&336&168&168&7&8 & \textbf{energy-y1  }&768&384&384&8&3\\\hline
\textbf{energy-y2  }&768&384&384&8&3 & \textbf{fertility  }&100&50&50&9&2\\\hline
\textbf{flags  }&194&97&97&28&8 & \textbf{glass  }&214&107&107&9&6\\\hline
\textbf{haberman-survival  }&306&153&153&3&2 & \textbf{hayes-roth  }&160&132&28&3&3\\\hline
\textbf{heart-cleveland  }&303&152&151&13&5 & \textbf{heart-hungarian  }&294&147&147&12&2\\\hline
\textbf{heart-switzerland  }&123&62&61&12&5 & \textbf{heart-va  }&200&100&100&12&5\\\hline
\textbf{hepatitis  }&155&78&77&19&2 & \textbf{hill-valley  }&606&303&303&100&2\\\hline
\textbf{horse-colic  }&368&300&68&25&2 & \textbf{ilpd-indian-liver  }&583&292&291&9&2\\\hline
\textbf{image-segmentation  }&2310&210&2100&18&7 & \textbf{ionosphere  }&351&176&175&33&2\\\hline
\textbf{iris  }&150&75&75&4&3 & \textbf{led-display  }&1000&500&500&7&10\\\hline
\textbf{lenses  }&24&12&12&4&3 & \textbf{letter  }&20000&10000&10000&16&26\\\hline
\textbf{libras  }&360&180&180&90&15 & \textbf{low-res-spect  }&531&266&265&100&9\\\hline
\textbf{lung-cancer  }&32&16&16&56&3 & \textbf{lymphography  }&148&74&74&18&4\\\hline
\textbf{magic  }&19020&9510&9510&10&2&
\textbf{miniboone  }&130064&65032&65032&50&2\\\hline
\textbf{molec-biol-promoter  }&106&53&53&57&2&
\textbf{mammographic  }&961&481&480&5&2\\\hline
\textbf{molec-biol-splice  }&3190&1595&1595&60&3&
\textbf{mushroom  }&8124&4062&4062&21&2\\\hline
\textbf{musk-1  }&476&238&238&166&2&
\textbf{musk-2  }&6598&3299&3299&166&2\\\hline
\textbf{nursery  }&12960&6480&6480&8&5&
\textbf{oocytes-merluccius-nucleus-4d  }&1022&511&511&41&2\\\hline
\textbf{oocytes-merluccius-states-2f  }&1022&511&511&25&3&
\textbf{oocytes-trisopterus-nucleus-2f  }&912&456&456&25&2\\\hline
\textbf{oocytes-trisopterus-states-5b  }&912&456&456&32&3&
\textbf{optical  }&5620&3823&1797&62&10\\\hline
\textbf{ozone  }&2536&1268&1268&72&2&
\textbf{page-blocks  }&5473&2737&2736&10&5\\\hline
\textbf{parkinsons  }&195&98&97&22&2&
\textbf{pendigits  }&10992&7494&3498&16&10\\\hline
\textbf{pima  }&768&384&384&8&2&
\textbf{pittsburg-bridges-MATERIAL  }&106&53&53&7&3\\\hline
\textbf{pittsburg-bridges-REL-L  }&103&52&51&7&3&
\textbf{pittsburg-bridges-SPAN  }&92&46&46&7&3\\\hline
\textbf{pittsburg-bridges-T-OR-D  }&102&51&51&7&2&
\textbf{pittsburg-bridges-TYPE  }&105&53&52&7&6\\\hline
\textbf{planning  }&182&91&91&12&2&
\textbf{plant-margin  }&1600&800&800&64&100\\\hline
\textbf{plant-shape  }&1600&800&800&64&100&
\textbf{plant-texture  }&1599&800&799&64&100\\\hline
\textbf{post-operative  }&90&45&45&8&3&
\textbf{primary-tumor  }&330&165&165&17&15\\\hline
\textbf{ringnorm  }&7400&3700&3700&20&2&
\textbf{seeds  }&210&105&105&7&3\\\hline
\textbf{semeion  }&1593&797&796&256&10&
\textbf{soybean  }&307&154&153&35&18\\\hline
\textbf{spambase  }&4601&2301&2300&57&2&
\textbf{spect  }&265&79&186&22&2\\\hline
\textbf{spectf  }&267&80&187&44&2&
\textbf{statlog-australian-credit  }&690&345&345&14&2\\\hline
\textbf{statlog-german-credit  }&1000&500&500&24&2&
\textbf{statlog-heart  }&270&135&135&13&2\\\hline
\textbf{statlog-image  }&2310&1155&1155&18&7&
\textbf{statlog-landsat  }&6435&4435&2000&36&6\\\hline
\textbf{statlog-shuttle  }&58000&43500&14500&9&7&
\textbf{statlog-vehicle  }&846&423&423&18&4\\\hline
\textbf{steel-plates  }&1941&971&970&27&7&
\textbf{synthetic-control  }&600&300&300&60&6\\\hline
\textbf{teaching  }&151&76&75&5&3&
\textbf{thyroid  }&7200&3772&3428&21&3\\\hline
\textbf{tic-tac-toe  }&958&479&479&9&2&
\textbf{titanic  }&2201&1101&1100&3&2\\\hline
\textbf{trains  }&10&5&5&29&2&
\textbf{twonorm  }&7400&3700&3700&20&2\\\hline
\textbf{vertebral-column-2clases  }&310&155&155&6&2&
\textbf{vertebral-column-3clases  }&310&155&155&6&3\\\hline
\textbf{wall-following  }&5456&2728&2728&24&4&
\textbf{waveform  }&5000&2500&2500&21&3\\\hline
\textbf{waveform-noise  }&5000&2500&2500&40&3&
\textbf{wine  }&178&89&89&13&3\\\hline
\textbf{wine-quality-red  }&1599&800&799&11&6&
\textbf{wine-quality-white  }&4898&2449&2449&11&7\\\hline
\textbf{yeast  }&1484&742&742&8&10&
\textbf{zoo  }&101&51&50&16&7\\\hline
\end{tabular}
\end{scriptsize}
}
\caption{The list of all datasets used in this article. This is a corrected version of dataset available in \cite{delgado14} based on~\cite{wainberg16}. The most dataset in this Table is available in UCI repository as raw formats.}\label{tableFullData}
\end{table*}

\begin{table*}
\centerline{
\begin{scriptsize}\begin{tabular}{l||c|c|c|c|c|c|c|c|c||c|c|c||c}
\hline
&\textbf{H-1}&\textbf{H-2}&\textbf{H-3}&\textbf{H-4}&\textbf{H+1}&\textbf{H+2}&\textbf{H+3}&\textbf{L-}&\textbf{L+}&\textbf{Logisitc}&\textbf{SVM}&\textbf{L2SVM}&\textbf{DWN}\\\hline\hline
\textbf{acute-inflammation  }&{\bf 100.00}&{\bf 100.00}&{\bf 100.00}&{\bf 100.00}&{\bf 100.00}&{\bf 100.00}&{\bf 100.00}&{\bf 100.00}&{\bf 100.00}&{\bf 100.00}&{\bf 100.00}&{\bf 100.00}&100.00\\\hline
\textbf{acute-nephritis  }&{\bf 100.00}&{\bf 100.00}&{\bf 100.00}&{\bf 100.00}&{\bf 100.00}&{\bf 100.00}&{\bf 100.00}&{\bf 100.00}&{\bf 100.00}&{\bf 100.00}&{\bf 100.00}&{\bf 100.00}&100.00\\\hline
\textbf{balloons  }&{\bf 87.50}&{\bf 87.50}&{\bf 87.50}&{\bf 87.50}&{\bf 87.50}&{\bf 87.50}&{\bf 87.50}&{\bf 87.50}&{\bf 87.50}&{\bf 87.50}&{\bf 87.50}&{\bf 87.50}&100.00\\\hline
\textbf{blood  }&76.31&76.42&76.15&76.10&{\bf 76.74}&76.42&{\bf 76.74}&76.10&{\bf 76.74}&76.20&76.20&75.67&80.30\\\hline
\textbf{breast-cancer  }&{\bf 72.45}&71.75&{\bf 72.45}&72.31&69.51&70.21&69.79&71.89&70.07&71.05&70.21&70.91&76.20\\\hline
\textbf{breast-cancer-wisc  }&96.62&96.56&{\bf 96.79}&{\bf 96.79}&96.05&95.42&95.70&96.50&95.93&96.45&96.33&{\bf 96.79}&97.40\\\hline
\textbf{breast-cancer-wisc-diag  }&97.54&97.75&97.82&97.96&97.82&98.17&98.03&98.10&98.17&{\bf 98.52}&97.89&97.89&98.20\\\hline
\textbf{breast-cancer-wisc-prog  }&79.39&{\bf 80.20}&78.99&79.80&{\bf 80.20}&79.39&79.80&79.19&79.19&76.57&75.76&77.78&82.80\\\hline
\textbf{chess-krvkp  }&{\bf 97.00}&96.92&96.70&96.72&96.57&95.94&96.20&96.81&96.51&96.65&96.80&96.65&99.60\\\hline
\textbf{congressional-voting  }&60.18&59.45&57.24&56.87&56.96&56.59&56.68&58.80&57.97&58.16&{\bf 60.37}&57.79&63.20\\\hline
\textbf{conn-bench-sonar-mines-rocks  }&77.31&76.73&76.54&75.96&76.92&76.35&76.92&76.73&77.31&77.12&{\bf 78.27}&74.81&90.40\\\hline
\textbf{credit-approval  }&87.94&87.30&88.41&88.12&88.00&87.07&{\bf 88.70}&88.12&88.00&88.17&87.54&88.12&88.40\\\hline
\textbf{cylinder-bands  }&74.22&74.37&73.52&{\bf 74.69}&72.81&73.36&73.12&73.75&73.20&73.67&{\bf 74.69}&74.14&81.20\\\hline
\textbf{echocardiogram  }&84.92&84.92&84.31&84.62&83.38&82.15&82.77&84.62&84.00&85.23&{\bf 87.38}&83.69&87.70\\\hline
\textbf{fertility  }&88.80&88.80&{\bf 89.20}&88.40&87.60&87.60&87.20&88.00&86.80&88.40&86.00&86.00&90.00\\\hline
\textbf{haberman-survival  }&72.94&73.33&73.07&73.07&72.42&72.55&72.68&73.07&72.81&{\bf 73.86}&73.59&{\bf 73.86}&77.10\\\hline
\textbf{heart-hungarian  }&86.12&86.80&86.67&86.53&86.80&{\bf 87.21}&86.67&86.67&86.94&86.80&85.03&86.53&86.60\\\hline
\textbf{hepatitis  }&75.84&74.29&76.10&76.10&78.18&77.66&78.44&77.40&78.70&{\bf 78.96}&73.51&75.06&89.70\\\hline
\textbf{hill-valley  }&85.08&83.30&86.40&{\bf 87.72}&85.35&85.08&85.21&77.89&83.23&80.53&64.09&68.71&74.30\\\hline
\textbf{horse-colic  }&66.18&66.18&66.18&66.18&66.18&66.18&66.18&66.18&66.18&66.18&{\bf 66.76}&66.18&91.20\\\hline
\textbf{ilpd-indian-liver  }&71.62&72.03&{\bf 73.13}&72.65&72.16&71.68&72.65&72.58&72.03&71.75&71.48&72.92&77.60\\\hline
\textbf{ionosphere  }&87.09&86.86&86.97&87.20&87.43&87.20&87.43&86.86&87.09&87.89&{\bf 88.46}&86.86&95.50\\\hline
\textbf{miniboone  }&91.03&{\bf 91.05}&91.04&91.03&90.99&90.90&91.00&91.02&91.03&90.35&90.32&87.66&93.80\\\hline
\textbf{molec-biol-promoter  }&74.72&76.60&77.74&78.11&78.11&{\bf 79.25}&77.36&77.36&78.11&77.74&75.47&75.09&93.30\\\hline
\textbf{mammographic  }&82.63&82.79&82.63&82.75&82.58&82.50&82.88&82.50&82.71&82.29&{\bf 83.12}&82.92&84.00\\\hline
\textbf{nursery  }&89.81&89.89&89.84&89.84&90.11&{\bf 90.28}&89.98&89.79&89.93&89.85&89.57&89.81&100.00\\\hline
\textbf{oocytes-merluccius-nucleus-4d  }&82.62&{\bf 82.94}&82.78&82.50&81.80&80.98&81.72&82.35&82.15&82.39&80.70&82.90&86.00\\\hline
\textbf{oocytes-merluccius-states-2f  }&91.51&91.39&91.47&91.47&91.15&91.55&91.19&91.51&91.15&91.19&91.82&{\bf 92.17}&94.00\\\hline
\textbf{oocytes-trisopterus-states-5b  }&92.98&{\bf 93.16}&93.07&92.68&92.19&92.24&92.50&92.46&93.07&92.85&92.85&{\bf 93.16}&95.10\\\hline
\textbf{optical  }&94.52&94.46&94.46&94.51&94.68&94.38&94.59&{\bf 94.81}&94.54&94.66&94.29&94.47&98.70\\\hline
\textbf{pendigits  }&89.19&89.60&89.22&89.17&89.62&{\bf 89.78}&89.39&89.67&89.65&89.69&89.38&89.71&97.80\\\hline
\textbf{pittsburg-bridges-MATERIAL  }&83.02&83.40&85.28&85.28&84.15&84.53&84.53&85.28&84.53&87.92&{\bf 88.68}&{\bf 88.68}&91.30\\\hline
\textbf{pittsburg-bridges-SPAN  }&63.48&66.96&66.52&66.96&60.43&61.30&64.35&66.96&63.48&75.65&75.22&{\bf 76.09}&80.40\\\hline
\textbf{plant-margin  }&76.25&70.87&74.40&75.37&73.00&71.65&75.08&{\bf 77.07}&75.97&69.25&58.40&64.82&87.20\\\hline
\textbf{plant-texture  }&{\bf 79.00}&77.82&78.50&78.70&74.34&72.09&75.59&78.87&77.80&75.57&70.36&74.69&86.60\\\hline
\textbf{soybean  }&86.41&87.84&87.19&86.41&83.92&83.27&84.58&86.27&85.10&85.75&{\bf 88.10}&86.80&92.50\\\hline
\textbf{statlog-australian-credit  }&{\bf 67.71}&67.25&67.19&66.84&67.30&67.36&67.19&67.07&67.54&67.19&{\bf 67.71}&66.96&69.10\\\hline
\textbf{statlog-german-credit  }&75.68&75.68&75.76&75.52&77.24&76.56&{\bf 77.28}&76.12&76.92&77.04&75.64&77.16&79.00\\\hline
\textbf{statlog-heart  }&87.41&85.93&86.96&86.96&87.41&87.56&87.85&87.56&{\bf 88.00}&87.70&86.37&87.85&88.20\\\hline
\textbf{statlog-image  }&92.61&92.74&{\bf 92.80}&92.62&92.43&92.43&92.69&92.57&92.07&91.76&91.06&91.24&98.60\\\hline
\textbf{statlog-landsat  }&81.62&81.23&81.48&81.59&83.03&{\bf 83.24}&82.31&81.64&82.23&81.63&80.03&81.00&91.90\\\hline
\textbf{statlog-shuttle  }&94.48&93.48&93.45&93.45&95.31&{\bf 95.33}&94.81&93.54&94.37&93.12&92.07&92.50&100.00\\\hline
\textbf{twonorm  }&97.62&97.54&97.49&97.48&97.37&97.38&97.50&{\bf 97.67}&97.63&{\bf 97.67}&97.48&97.51&98.00\\\hline
\textbf{vertebral-column-2clases  }&82.97&{\bf 84.13}&83.61&83.61&81.81&81.03&82.97&82.97&83.23&82.06&82.06&83.23&87.40\\\hline
\textbf{vertebral-column-3clases  }&84.26&84.39&84.90&85.16&{\bf 86.45}&85.03&85.94&84.65&85.03&84.26&84.26&85.03&87.40\\\hline
\textbf{wall-following  }&70.75&69.17&69.06&69.13&69.52&67.05&68.68&69.23&69.57&69.50&{\bf 72.49}&66.41&99.90\\\hline
\textbf{waveform  }&86.85&86.84&86.94&86.82&86.80&{\bf 87.06}&86.82&86.99&86.81&86.62&86.77&86.77&87.10\\\hline\hline
\textbf{Mean}&83.49&83.37&83.57&{\bf 83.60}&83.20&82.95&83.34&83.46&83.43&83.48&82.60&82.82&89.25\\\hline
\end{tabular}
\end{scriptsize}
}
\caption{Comparison of various two class classification models. We report the averaged classification accuracy($\%$) of five times repeated experiments of each classifier with test data in Table \ref{tableFullData}. Here  DWN is not a specific classifier, but the best classifier among $178$ classifiers in \cite{delgado14} for each data set. Overall, a H-Logitron submodel {\bf H-4} shows the best performance.}\label{tableTwo}
\end{table*}

\begin{table*}
\centerline{
\begin{scriptsize}\begin{tabular}{l||c|c|c|c|c|c|c|c|c||c|c|c||c}
\hline
&\textbf{H-1}&\textbf{H-2}&\textbf{H-3}&\textbf{H-4}&\textbf{H+1}&\textbf{H+2}&\textbf{H+3}&\textbf{L-}&\textbf{L+}&\textbf{Logisitc}&\textbf{SVM}&\textbf{L2SVM}&\textbf{DWN}\\\hline\hline
\textbf{abalone  }&64.98&65.03&65.21&65.07&65.09&{\bf 65.51}&65.26&65.08&65.03&65.06&61.33&65.21&67.40\\\hline
\textbf{adult  }&84.27&84.16&84.25&84.27&84.21&84.09&84.16&84.27&84.29&84.29&{\bf 84.35}&84.06&86.20\\\hline
\textbf{annealing  }&86.82&{\bf 87.57}&86.97&86.72&86.47&86.52&86.17&86.72&86.57&86.97&87.22&87.22&99.00\\\hline
\textbf{arrhythmia  }&{\bf 69.03}&68.58&68.94&68.32&67.08&66.64&66.64&67.26&67.26&68.50&64.69&63.72&77.40\\\hline
\textbf{audiology-std  }&77.60&76.00&77.60&{\bf 80.00}&74.40&75.20&73.60&79.20&76.80&68.00&68.00&66.40&92.00\\\hline
\textbf{balance-scale  }&88.14&88.14&88.14&88.14&88.21&87.76&88.14&88.14&88.21&88.14&{\bf 88.40}&88.08&99.00\\\hline
\textbf{bank  }&{\bf 89.00}&88.81&88.94&88.97&88.88&88.68&88.95&88.81&88.87&88.84&88.49&88.82&90.50\\\hline
\textbf{breast-tissue  }&67.17&66.42&65.66&65.66&64.91&63.40&64.53&64.91&66.04&64.53&{\bf 69.06}&66.42&79.80\\\hline
\textbf{car  }&82.01&81.60&81.69&82.29&82.04&81.64&82.18&82.29&{\bf 82.41}&{\bf 82.41}&79.91&81.25&99.20\\\hline
\textbf{cardiotocography-10clases  }&76.56&76.39&77.06&77.06&78.55&78.40&{\bf 78.61}&77.74&78.48&77.93&75.52&77.31&88.50\\\hline
\textbf{cardiotocography-3clases  }&89.69&89.50&89.65&89.67&89.61&89.39&89.58&89.78&89.69&{\bf 89.93}&89.73&89.80&95.60\\\hline
\textbf{chess-krvk  }&27.94&27.58&27.89&{\bf 27.95}&27.55&26.56&27.03&27.94&27.73&27.84&14.73&27.62&88.80\\\hline
\textbf{conn-bench-vowel-deterding  }&53.11&{\bf 55.98}&54.85&53.18&52.88&53.94&52.88&52.20&53.03&51.74&51.21&54.09&100.00\\\hline
\textbf{connect-4  }&75.43&75.38&75.38&75.42&75.45&75.43&75.45&75.44&75.45&{\bf 75.47}&75.38&75.41&90.40\\\hline
\textbf{contrac  }&50.11&50.11&50.00&50.30&50.90&50.60&{\bf 51.11}&50.52&50.98&50.87&46.71&49.95&57.20\\\hline
\textbf{dermatology  }&97.70&97.60&98.36&{\bf 98.47}&97.60&97.70&97.92&98.25&97.60&97.70&96.61&97.81&98.60\\\hline
\textbf{ecoli  }&86.90&86.79&87.74&88.10&87.98&87.02&87.26&{\bf 88.45}&86.90&88.21&85.00&87.74&90.90\\\hline
\textbf{energy-y1  }&85.31&85.36&85.21&85.57&86.09&85.05&85.78&85.36&86.04&{\bf 87.14}&84.43&86.35&97.80\\\hline
\textbf{energy-y2  }&88.54&88.54&88.49&88.54&89.01&88.49&89.64&89.17&89.06&88.39&86.88&{\bf 90.05}&93.40\\\hline
\textbf{flags  }&54.85&54.23&54.64&53.81&53.40&53.61&53.61&53.81&52.37&52.78&{\bf 55.05}&54.23&70.10\\\hline
\textbf{glass  }&64.11&64.49&63.93&63.36&60.75&60.56&61.50&62.99&60.75&62.24&{\bf 66.17}&64.49&78.50\\\hline
\textbf{hayes-roth  }&{\bf 3.57}&{\bf 3.57}&{\bf 3.57}&{\bf 3.57}&{\bf 3.57}&{\bf 3.57}&{\bf 3.57}&{\bf 3.57}&{\bf 3.57}&{\bf 3.57}&{\bf 3.57}&{\bf 3.57}&92.90\\\hline
\textbf{heart-cleveland  }&62.38&{\bf 64.77}&62.38&61.46&60.53&60.53&60.66&60.40&60.26&61.19&61.72&62.65&64.80\\\hline
\textbf{heart-switzerland  }&38.69&37.38&37.70&38.69&40.98&40.00&40.33&38.69&{\bf 41.31}&36.07&37.05&36.07&53.20\\\hline
\textbf{heart-va  }&{\bf 32.20}&29.20&29.80&30.00&26.40&26.60&26.80&29.80&27.00&27.20&32.00&27.00&40.00\\\hline
\textbf{image-segmentation  }&24.15&22.43&25.70&26.23&27.94&{\bf 28.66}&28.38&26.48&27.83&25.09&20.37&26.37&85.00\\\hline
\textbf{iris  }&94.67&94.67&94.67&94.93&94.93&{\bf 96.00}&95.73&95.20&95.47&94.67&91.73&94.40&99.30\\\hline
\textbf{led-display  }&70.72&70.60&70.88&70.56&70.96&70.92&{\bf 71.08}&70.68&70.88&70.56&68.76&70.16&74.80\\\hline
\textbf{lenses  }&{\bf 75.00}&{\bf 75.00}&{\bf 75.00}&{\bf 75.00}&{\bf 75.00}&{\bf 75.00}&{\bf 75.00}&{\bf 75.00}&{\bf 75.00}&{\bf 75.00}&{\bf 75.00}&{\bf 75.00}&95.80\\\hline
\textbf{letter  }&71.83&68.89&70.71&71.43&{\bf 72.26}&70.81&72.03&72.21&72.24&72.20&59.47&70.24&97.40\\\hline
\textbf{libras  }&63.44&62.78&62.56&62.56&62.22&59.11&60.89&63.22&61.78&63.00&62.56&{\bf 64.33}&89.20\\\hline
\textbf{low-res-spect  }&87.32&87.85&87.70&{\bf 89.28}&86.79&86.42&87.09&88.08&86.87&86.87&86.87&89.06&93.40\\\hline
\textbf{lung-cancer  }&55.00&52.50&56.25&60.00&61.25&{\bf 62.50}&61.25&58.75&61.25&57.50&{\bf 62.50}&56.25&75.00\\\hline
\textbf{lymphography  }&80.81&83.51&83.24&83.51&83.24&{\bf 83.78}&83.24&{\bf 83.78}&82.70&{\bf 83.78}&80.27&82.43&89.20\\\hline
\textbf{magic  }&79.04&78.93&78.95&79.04&79.07&78.66&78.95&79.08&79.07&{\bf 79.10}&78.92&78.99&88.30\\\hline
\textbf{molec-biol-splice  }&82.48&82.38&82.51&82.60&82.75&{\bf 83.10}&82.81&82.60&82.66&82.47&82.21&82.07&96.30\\\hline
\textbf{mushroom  }&97.02&95.72&95.49&95.23&93.99&92.51&93.22&95.08&94.40&94.49&{\bf 97.66}&93.99&100.00\\\hline
\textbf{musk-1  }&83.61&84.12&83.53&82.94&83.45&80.76&84.20&84.03&{\bf 84.45}&83.11&84.37&83.36&93.70\\\hline
\textbf{musk-2  }&{\bf 95.13}&94.90&94.79&94.67&94.37&93.28&94.40&94.57&94.45&94.56&95.08&94.60&99.80\\\hline
\textbf{oocytes-trisopterus-nucleus-2f  }&79.96&{\bf 80.26}&79.87&79.25&78.51&77.28&78.38&78.99&78.86&78.55&78.60&79.04&86.80\\\hline
\textbf{ozone  }&97.08&97.03&97.11&97.08&97.02&97.13&96.92&97.10&97.02&{\bf 97.16}&97.10&97.13&97.40\\\hline
\textbf{page-blocks  }&96.20&96.20&96.25&96.21&96.19&{\bf 96.32}&96.20&96.25&96.29&96.27&95.99&96.02&97.50\\\hline
\textbf{parkinsons  }&{\bf 87.01}&{\bf 87.01}&86.80&84.95&82.89&78.56&81.86&84.74&82.89&82.06&83.51&84.54&94.40\\\hline
\textbf{pima  }&76.56&{\bf 77.14}&76.51&76.41&76.04&76.15&76.09&76.77&76.30&76.30&75.68&76.41&79.00\\\hline
\textbf{pittsburg-bridges-REL-L  }&69.41&67.06&69.80&69.80&68.24&65.88&68.24&{\bf 70.59}&67.84&68.63&70.20&67.84&79.80\\\hline
\textbf{pittsburg-bridges-T-OR-D  }&86.67&88.24&89.02&88.24&88.63&87.45&87.84&88.24&87.45&89.80&86.27&{\bf 90.20}&93.50\\\hline
\textbf{pittsburg-bridges-TYPE  }&63.08&63.85&64.62&62.69&53.85&51.92&56.54&58.85&56.54&{\bf 66.54}&58.08&63.85&76.00\\\hline
\textbf{planning  }&66.59&66.15&65.05&64.84&64.40&67.25&65.71&65.05&64.18&63.74&{\bf 71.43}&65.49&72.80\\\hline
\textbf{plant-shape  }&51.70&48.22&49.10&50.72&51.70&47.45&50.45&{\bf 53.32}&52.72&50.17&39.42&47.47&72.30\\\hline
\textbf{post-operative  }&64.89&{\bf 68.00}&67.11&66.67&64.44&63.11&66.67&66.67&66.22&55.56&57.78&55.56&74.20\\\hline
\textbf{primary-tumor  }&43.03&43.39&44.00&43.76&42.67&41.33&41.94&{\bf 44.24}&42.91&43.03&40.48&42.67&52.70\\\hline
\textbf{ringnorm  }&77.25&{\bf 77.40}&77.30&77.27&76.41&75.49&76.17&77.24&76.81&76.89&77.35&77.11&98.70\\\hline
\textbf{seeds  }&92.19&93.33&93.52&93.33&93.71&93.71&{\bf 94.10}&93.90&93.52&92.00&91.81&92.38&97.20\\\hline
\textbf{semeion  }&91.21&91.46&91.08&91.08&92.51&{\bf 92.81}&92.44&91.93&92.26&89.12&86.23&85.30&96.40\\\hline
\textbf{spambase  }&92.62&92.80&92.64&92.51&92.15&91.75&91.97&92.52&92.39&92.43&{\bf 92.86}&92.03&96.10\\\hline
\textbf{spect  }&60.86&63.87&64.62&64.95&65.27&65.27&65.16&63.44&65.38&66.34&{\bf 68.28}&65.27&72.20\\\hline
\textbf{spectf  }&{\bf 91.98}&{\bf 91.98}&{\bf 91.98}&{\bf 91.98}&{\bf 91.98}&{\bf 91.98}&{\bf 91.98}&{\bf 91.98}&{\bf 91.98}&56.47&57.11&57.86&92.50\\\hline
\textbf{statlog-vehicle  }&78.16&{\bf 78.49}&77.87&77.97&77.30&75.74&77.40&77.87&77.49&77.68&77.68&77.78&85.10\\\hline
\textbf{steel-plates  }&70.56&70.80&70.54&70.29&70.95&{\bf 71.18}&70.97&70.60&70.72&70.58&70.80&70.37&80.40\\\hline
\textbf{synthetic-control  }&92.60&91.53&92.80&92.73&93.67&93.00&{\bf 93.73}&92.93&{\bf 93.73}&92.20&87.93&89.40&99.70\\\hline
\textbf{teaching  }&{\bf 52.00}&47.73&46.67&48.27&46.13&46.67&45.87&49.07&45.87&47.47&51.20&45.33&64.20\\\hline
\textbf{thyroid  }&95.27&94.94&94.96&94.95&94.98&95.10&94.92&94.98&94.96&95.04&{\bf 95.47}&94.42&99.00\\\hline
\textbf{tic-tac-toe  }&{\bf 97.91}&{\bf 97.91}&{\bf 97.91}&{\bf 97.91}&{\bf 97.91}&{\bf 97.91}&{\bf 97.91}&{\bf 97.91}&{\bf 97.91}&{\bf 97.91}&{\bf 97.91}&{\bf 97.91}&100.00\\\hline
\textbf{titanic  }&{\bf 77.55}&{\bf 77.55}&{\bf 77.55}&{\bf 77.55}&{\bf 77.55}&{\bf 77.55}&{\bf 77.55}&{\bf 77.55}&{\bf 77.55}&{\bf 77.55}&77.44&{\bf 77.55}&79.10\\\hline
\textbf{trains  }&44.00&{\bf 60.00}&{\bf 60.00}&{\bf 60.00}&{\bf 60.00}&{\bf 60.00}&{\bf 60.00}&{\bf 60.00}&{\bf 60.00}&{\bf 60.00}&{\bf 60.00}&{\bf 60.00}&100.00\\\hline
\textbf{waveform-noise  }&{\bf 86.01}&85.98&85.98&{\bf 86.01}&85.75&85.61&85.70&86.00&85.86&85.89&85.62&85.95&87.40\\\hline
\textbf{wine  }&98.65&{\bf 98.88}&{\bf 98.88}&{\bf 98.88}&{\bf 98.88}&{\bf 98.88}&{\bf 98.88}&{\bf 98.88}&{\bf 98.88}&98.20&{\bf 98.88}&{\bf 98.88}&100.00\\\hline
\textbf{wine-quality-red  }&56.37&56.90&56.90&56.07&57.22&{\bf 57.35}&57.22&56.57&57.20&56.67&56.25&56.32&69.00\\\hline
\textbf{wine-quality-white  }&{\bf 53.70}&52.70&53.68&53.51&52.63&52.37&52.67&53.64&52.91&53.47&47.03&53.34&69.10\\\hline
\textbf{yeast  }&60.24&60.11&{\bf 60.38}&60.19&59.54&58.89&59.06&60.30&59.76&60.13&50.11&59.92&63.70\\\hline
\textbf{zoo  }&93.60&{\bf 96.00}&95.60&{\bf 96.00}&{\bf 96.00}&94.00&{\bf 96.00}&{\bf 96.00}&{\bf 96.00}&{\bf 96.00}&95.20&{\bf 96.00}&99.00\\\hline\hline
\textbf{Mean}&73.26&73.39&73.55&{\bf 73.59}&73.18&72.73&73.19&73.57&73.31&72.50&71.53&72.34&85.83\\\hline
\end{tabular}
\end{scriptsize}
}
\caption{Comparison of various multi-class classification models. We report the averaged classification accuracy($\%$) of five times repeated experiments of each classifier with test data in Table \ref{tableFullData}. Here  DWN is not a specific classifier, but the best classifier among $178$ classifiers in \cite{delgado14} for each data set. Overall, a H-Logitron submodel {\bf H-4} shows the best performance. 
}\label{tableMc}
\end{table*}

\end{document}